\documentclass{article}

\usepackage[utf8]{inputenc}  
\usepackage[T1]{fontenc}
\usepackage{microtype}      % Improves character protrusion and expansion
\usepackage{graphicx}
\usepackage{booktabs}       % For professional-quality tables
\usepackage{amsmath,amssymb,amsthm}
\usepackage{mathtools}
\usepackage{lmodern}
\usepackage{xspace}
\usepackage{float}
\usepackage{caption}

\usepackage[dvipsnames,table]{xcolor} 
\usepackage{multirow}
\usepackage{makecell}
\usepackage{xltabular}
\usepackage{rotating}

\usepackage{hyperref}
\usepackage[capitalise]{cleveref}

\usepackage[accepted]{icml2026}   % Use for final camera-ready version

\usepackage{algorithm}
\usepackage{algorithmic}
\usepackage{listings}
\usepackage{listingsutf8}

\usepackage{subcaption} 
\usepackage{svg}
\usepackage{tikz}
\usepackage{overpic}
\usepackage{pgf} 
\usepackage{wrapfig}
\usepackage{varwidth}
\usepackage[most]{tcolorbox}
\tcbuselibrary{listings,breakable,skins}
\usepackage{subfiles}
\usepackage{verbatim}

\usepackage{siunitx} 
\usepackage{enumitem}
\usepackage{soul}
\sethlcolor{yellow}
\usepackage[textsize=tiny]{todonotes}

\soulregister\Cref7
\soulregister\cref7

\newcommand{\ie}{\textit{i.e.,}\xspace}
\newcommand{\eg}{\textit{e.g.,}\xspace}

\DeclareMathOperator{\SVID}{SVID}

\DeclareMathOperator{\sign}{sign}
\renewcommand{\vec}[1]{\mathbf{#1}}
\newcommand{\mat}[1]{\mathbf{#1}}
\newcommand{\g}{\vec{g}}
\newcommand{\h}{\vec{h}}
\newcommand{\x}{\vec{x}}
\newcommand{\y}{\vec{y}}

\newcommand{\R}{\mat{R}}
\newcommand{\B}{\mat{B}}
\newcommand{\W}{\mat{W}}
\newcommand{\s}{\vec{s}}

\newcommand{\THROUGHPUT}{291.88\xspace}
\newcommand{\SPEEDUP}{4.49$\times$\xspace}

\newcommand{\cmark}{\checkmark}
\newcommand{\xmark}{\phantom{\checkmark}} % Empty box-sized space

\newif\ifshowfixme \showfixmefalse
\ifshowfixme

  \newenvironment{fixmeblock}{\begingroup\color{red}}{\endgroup}  
\else

\fi

\newif\ifshowrevised \showrevisedtrue   
\ifshowrevised
  \newenvironment{revisedblock}{\begingroup\color{blue}}{\endgroup}  
\else  
    
\fi

\lstdefinelanguage{CUDA}[]{C++}{
  morekeywords={__global__,__device__,__host__,__forceinline__,__restrict__,__launch_bounds__,half,half2,dim3,uint2,uint4},
  sensitive=true
}
\lstdefinestyle{cudaStyle}{
  language=CUDA,                                     
  basicstyle=\ttfamily\scriptsize,      
  keywordstyle=\bfseries\color{blue!60!black},
  commentstyle=\itshape\color{gray!65!black},
  numbers=left, numberstyle=\tiny, numbersep=6pt,
  breaklines=true, breakatwhitespace=true,
  columns=fullflexible, keepspaces=true, showstringspaces=false,
  frame=single, rulecolor=\color{black!20},
  xleftmargin=10pt, framexleftmargin=8pt
}

\newtheorem{proposition}{Proposition}

\newenvironment{analysis}{\begin{proof}[\itshape Analysis.]}{\end{proof}}

\theoremstyle{plain}

\theoremstyle{definition}

\theoremstyle{remark}

\icmltitlerunning{RaBiT: Residual-Aware Binarization Training for Accurate and Efficient LLMs}

\begin{document}

\twocolumn[
  \icmltitle{\texorpdfstring{\raisebox{-0.1em}{\includegraphics[height=3.0ex]{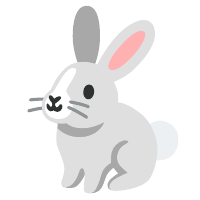}}~}{}%
  RaBiT: Residual-Aware Binarization Training\\for Accurate and Efficient LLMs}

  \icmlsetsymbol{equal}{*}
  \icmlsetsymbol{corr}{\dag}

  \begin{icmlauthorlist}
    \icmlauthor{Youngcheon You}{equal,sr}
    \icmlauthor{Banseok Lee}{equal,sr}
    \icmlauthor{Minseop Choi}{sr}
    \icmlauthor{Seonyoung Kim}{sr}

    \icmlauthor{Hyochan Chong}{sr}
    \icmlauthor{Changdong Kim}{sr}
    \icmlauthor{Youngmin Kim}{corr,sr}
    \icmlauthor{Dongkyu Kim}{corr,sr}
  \end{icmlauthorlist}

  \icmlaffiliation{sr}{Samsung Research, Seoul, Korea}

  \icmlcorrespondingauthor{Dongkyu Kim}{dongkyu.k@samsung.com}
  \icmlcorrespondingauthor{Youngmin Kim}{ym1012.kim@samsung.com}
  
  % You may provide any keywords that you find helpful for describing your
  % paper; these are used to populate the "keywords" metadata in the PDF but
  % will not be shown in the document
  \icmlkeywords{Machine Learning, ICML}

  \vskip 0.3in
]

\printAffiliationsAndNotice{\icmlEqualContribution}

\begin{abstract}
    Efficient deployment of large language models (LLMs) requires extreme quantization, forcing a critical trade-off between low-bit efficiency and performance. Residual binarization enables hardware-friendly, matmul-free inference by stacking binary ($\pm$1) layers, but is plagued by pathological feature co-adaptation. We identify a key failure mode, which we term \textbf{inter-path adaptation}: during quantization-aware training (QAT), parallel residual binary paths learn redundant features, degrading the error-compensation structure and limiting the expressive capacity of the model. While prior work relies on heuristic workarounds (\eg path freezing) that constrain the solution space, we propose \textbf{RaBiT}, a novel quantization framework that resolves co-adaptation by algorithmically enforcing a residual hierarchy. Its core mechanism sequentially derives each binary path from a single shared full-precision weight, which ensures that every path corrects the error of the preceding one. This process is stabilized by a robust initialization that prioritizes functional preservation over mere weight approximation. RaBiT redefines the 2-bit accuracy-efficiency frontier: it achieves state-of-the-art performance, rivals even hardware-intensive Vector Quantization (VQ) methods, and delivers a \textbf{\SPEEDUP inference speed-up} over full-precision models on an RTX 4090. Code is available at {\hypersetup{urlcolor=black}\href{https://github.com/SamsungLabs/RaBiT}{\texttt{github.com/SamsungLabs/RaBiT}}}
\end{abstract}

% ===================== SECTION 1: INTRODUCTION =====================
\section{Introduction}
Model compression is essential for efficient deployment of large language models (LLMs). While 4-bit quantization methods~\citep{frantar2022gptq, lin2024awq} have emerged as a successful industry standard~\citep{vllm2023, sglang2024}, the relentless pursuit of greater efficiency is pushing the research frontier towards the 2-bit regime. This transition to lower bit compression introduces architectural trade-offs that characterize the current domain.

At the 2-bit frontier, two primary strategies present a choice between representational fidelity and hardware efficiency. Vector Quantization (VQ) methods typically yield high accuracy but incur hardware overhead due to lookup tables or complex rotations~\citep{tseng2024quip,tseng2024qtip, egiazarian2024extreme}. Conversely, residual binarization--stacking multiple binary layers--facilitates exceptional, matmul-free efficiency. However, this approach has struggled to preserve performance, due to fundamental training instabilities that prevent full potential realization~\citep{qbb2024,bitstack2024,mbok2025}.

The core promise of a residual architecture--that subsequent paths compensate for the errors of preceding ones--is fundamentally undermined by feature co-adaptation~\citep{hinton2012improving}, a pathological training dynamic where parallel components learn redundant features. In residual binarization, we identify a critical manifestation of this phenomenon, which we term \textbf{inter-path adaptation}. During standard quantization-aware training (QAT)~\citep{bengio2013estimating,hubara2018quantized}, a structurally agnostic global gradient is simultaneously applied to all paths. This forces each path to learn redundant features in a race to minimize the global objective, overriding their intended compensatory roles. The result is a breakdown of the residual hierarchy that severely limits the model's expressive power.

\begin{figure*}[t]
    \centering
    \begin{overpic}[width=0.865\linewidth, grid=false]{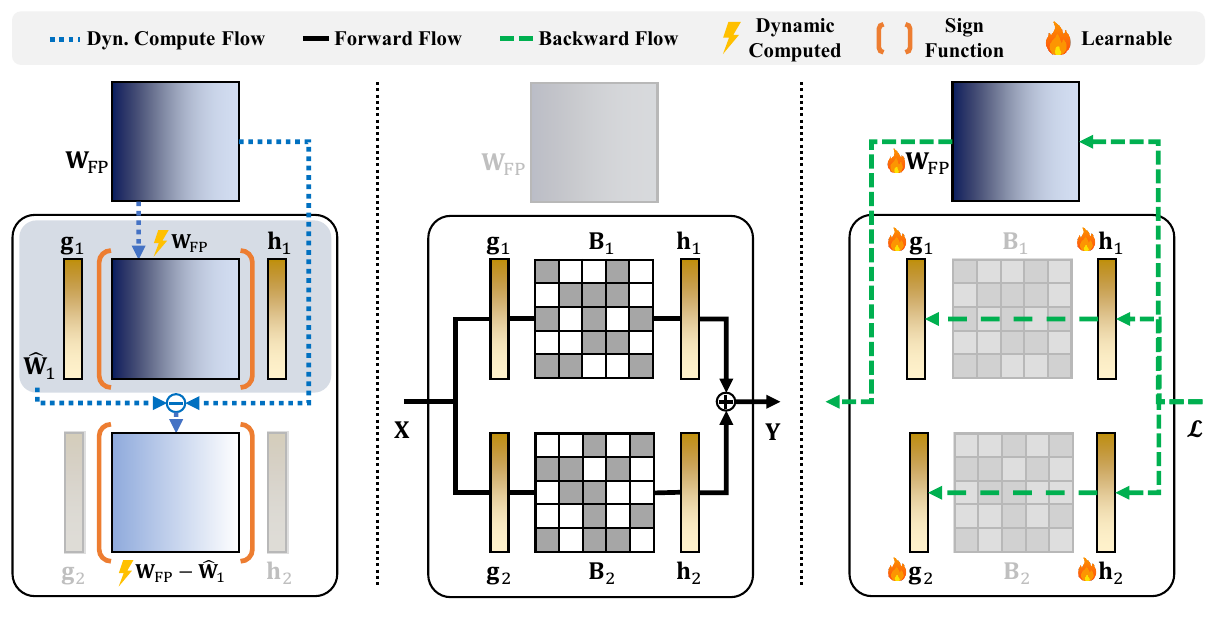}
        \put(13, 3){\makebox(0,0)[tl]{\footnotesize\textbf{(a)}}}
        \put(47, 3){\makebox(0,0)[tl]{\footnotesize\textbf{(b)}}}
        \put(82, 3){\makebox(0,0)[tl]{\footnotesize\textbf{(c)}}}
    \end{overpic}
    \caption{
        \textbf{Overview of the RaBiT Training Framework.}
        \textbf{(a) Dynamic Compute Process:} During training, binary paths are dynamically derived from a shared weight $\W_{\mathrm{FP}}$ to enforce a residual hierarchy.
        \textbf{(b) Forward Pass:} For inference, these paths execute in parallel for matmul-free efficiency.
        \textbf{(c) Backward Pass:} Gradients from the loss $\mathcal{L}$ update both the learnable scales ($\g_i, \h_i$) and the shared $\W_{\mathrm{FP}}$.
    }
    \label{fig:rabit_overview}
\end{figure*}

Previous efforts have employed heuristic constraints, such as path freezing~\citep{qbb2024, mbok2025}, which inherently limit the model's capacity to derive an optimal joint solution.
To address this, we propose \textbf{Residual-Aware Binarization Training (RaBiT)}, a QAT framework that resolves inter-path adaptation by design, as depicted in \Cref{fig:rabit_overview}. Instead of using independent latent weights, RaBiT maintains a single shared full-precision weight from which binary paths are sequentially derived on-the-fly, guided by learnable scales. This algorithmically enforces a residual hierarchy, training each path to correct its predecessor's error. Combined with a robust, function-aware initialization, RaBiT achieves state-of-the-art accuracy while delivering up to a \SPEEDUP inference speed-up and halving the training memory footprint.

Our contributions can be summarized as follows:
\begin{itemize}[itemsep=2pt, topsep=0pt, parsep=0pt, partopsep=0pt, leftmargin=*]
    \item We identify and analyze inter-path adaptation, a critical manifestation of feature co-adaptation in residual binarization, where the intended error-compensation structure breaks down during Standard QAT, as parallel paths become functionally redundant.
    \item We propose RaBiT, a novel QAT framework that resolves inter-path adaptation by enforcing residual coupling on-the-fly. The mechanism inherently \textit{halves the training memory footprint} and is stabilized by a robust function-aware initialization strategy to directly address the unstable dynamics of extreme QAT.
    \item We demonstrate that RaBiT achieves state-of-the-art accuracy at 2-bit precision, delivering up to a \SPEEDUP inference speed-up while maintaining competitive performance against hardware-intensive VQ methods through matmul-free operations.
\end{itemize}
    
\begin{table*}[t]
    \centering
    \caption{
        \textbf{Detailed Decomposition of MSE Loss across Representative Layers of Llama2-7B.}
        The table decomposes the total MSE into its core components. Path correlation ($\operatorname{Corr}(y_1, y_2)$) captures cancellation between binary paths, while residual alignment ($\operatorname{Corr}(R_1, y_2)$) captures how strongly the second path aligns the remaining functional residual.
    }
    \vspace{-5pt}
    \label{tab:detailed_mse_decomposition}
    \small
    \resizebox{0.95\textwidth}{!}{%
    \begin{tabular}{ll r|rrr r|c}
        \toprule
        \multirow{2}{*}{\textbf{Layer}} & \multirow{2}{*}{\textbf{Method}} 
        & \multicolumn{1}{c|}{\textbf{Total MSE}} 
        & \multicolumn{1}{c}{\textbf{Base Error}} 
        & \multicolumn{1}{c}{\textbf{Path Amp.}} 
        & \multicolumn{1}{c}{\textbf{Path Corr.}} 
        & \multicolumn{1}{c|}{\textbf{Covariance}} 
        & \multicolumn{1}{c}{\textbf{Residual Align.}} \\
        & 
        & \multicolumn{1}{c|}{($C'$ + Cov.) $\downarrow$} 
        & \multicolumn{1}{c}{\textbf{($C'$)}} 
        & \multicolumn{1}{c}{\textbf{($2\sigma_1\sigma_2$)}} 
        & \multicolumn{1}{c}{\textbf{($\operatorname{Corr}(y_1, y_2)$)} $\downarrow$} 
        & \multicolumn{1}{c|}{(Amp. $\times$ Corr.)} 
        & \multicolumn{1}{c}{\textbf{($\operatorname{Corr}(R_1, y_2)$) $\uparrow$}} \\
        \midrule
        \multirow{3}{*}{Layer 5 (Early)} 
        & Standard QAT & 0.0017 & 0.0019 & 0.0030 & -0.0752 & -0.0002 & 0.4395 \\
        & DB-LLM & 0.0027 & 0.0038 & 0.0021 & -0.4941 & -0.0010 & 0.2617 \\
        & MBOK & 0.0012 & 0.0013 & 0.0022 & -0.0320 & -0.0001 & 0.4805 \\
        & \textbf{RaBiT (Ours)} & \textbf{0.0009} & 0.0023 & 0.0028 & \textbf{-0.4961} & \textbf{-0.0014} & \textbf{0.6484} \\
        \cmidrule{2-8}
        \multirow{3}{*}{Layer 15 (Mid)} 
        & Standard QAT & 0.0156 & 0.0182 & 0.0213 & -0.1240 & -0.0026 & 0.4082 \\
        & DB-LLM & 0.0210 & 0.0236 & 0.0057 & \textbf{-0.4570} & -0.0026 & 0.2500 \\
        & MBOK & 0.0119 & 0.0139 & 0.0168 & -0.1187 & -0.0020 & 0.4570 \\
        & \textbf{RaBiT (Ours)} & \textbf{0.0094} & 0.0163 & 0.0200 & -0.3418 & \textbf{-0.0068} & \textbf{0.5820} \\
        \cmidrule{2-8}
        \multirow{3}{*}{Layer 25 (Late)} 
        & Standard QAT & 0.0482 & 0.0575 & 0.0725 & -0.1279 & -0.0093 & 0.4668 \\
        & DB-LLM & 0.0818 & 0.0878 & 0.0125 & \textbf{-0.4824} & -0.0060 & 0.2490 \\
        & MBOK & 0.0399 & 0.0474 & 0.0625 & -0.1196 & -0.0075 & 0.5039 \\
        & \textbf{RaBiT (Ours)} & \textbf{0.0327} & 0.0609 & 0.0798 & -0.3535 & \textbf{-0.0282} & \textbf{0.6172} \\
        \bottomrule
    \end{tabular}%
    }
\end{table*}

% ===================== SECTION 2: RELATED WORKS =====================
\section{Related Works}
\label{sec:related_works}

\paragraph{The Shift to QAT in Extreme Quantization.}
Post-Training Quantization (PTQ) methods, such as GPTQ~\citep{frantar2022gptq} and AWQ~\citep{lin2024awq}, have proven to be highly successful for compressing large language models to 3- or 4-bit precision by focusing on weight approximation.
However, these methods encounter substantial performance degradation at lower bit-widths (\eg 2-bit)~\citep{bitnet2023, billm2024, arbllm2025}, as the information loss inherent to coarse quantization cannot be mitigated solely by weight approximation. Consequently, more recent works target preserving global model functionality~\citep{paretoq2025} via quantization-aware training (QAT)~\citep{hubara2018quantized, krishnamoorthi2018quantizing}. QAT integrates low-precision arithmetic simulations into the fine-tuning process, enabling parameter adaptation to the target bit-width constraints. Although the non-differentiable nature of quantization presents challenges--typically managed via the Straight-Through Estimator (STE)~\citep{bengio2013estimating}--modern frameworks for binary models have achieved stability by updating a latent full-precision weight via surrogate gradients~\citep{bitnet2023,onebit2024,binarymos2024,littlebit2025}. This work builds upon this methodology to address the specific complexities of residual binary architectures.

\paragraph{Co-adaptation in Residual Binary Architectures.}
To enhance the limited expressive capacity of single low-bit layers, residual binarization stacks multiple low-bit paths ($\W \approx \sum_i \hat{\W}_{i}$), thereby achieving higher effective precision while retaining matmul-free efficiency~\citep{bitstack2024}. Nevertheless, this parallel structure is prone to \textbf{feature co-adaptation}~\citep{hinton2012improving}, where components learn redundant representations. This phenomenon previously motivated regularization techniques, such as Dropout~\citep{srivastava2014dropout}.

We identify a critical form of feature co-adaptation in residual binarization, termed inter-path adaptation, wherein a shared QAT gradient compels parallel paths to acquire redundant features, compromising the error-compensation hierarchy. Unlike prior works that depend on suboptimal heuristics like path freezing~\citep{qbb2024, mbok2025}, which restrict the solution space, RaBiT resolves this issue structurally, facilitating true joint optimization while algorithmically enforcing the hierarchy.

% ===================== SECTION 3: MOTIVATION =====================
\section{Motivation}
\label{sec:motivation}
The goal of quantization-aware training (QAT) is to make a quantized student model, $\mathbf{Y}_s$, functionally mimic its full-precision teacher, $\mathbf{Y}_t$. This is typically achieved by optimizing an objective that combines the final task loss with an intermediate knowledge distillation loss, often formulated as the mean squared error (MSE)~\citep{distilling2015,llmqat2023}. While our full training objective also includes the final KL divergence-based task loss, we initially focus our analysis on the MSE component for its analytical tractability. We formally prove that this error-decomposition logic and the optimality of RaBiT's residual coupling extend rigorously to the KL divergence objective (see \Cref{appendix:kl_analysis}). 

The additive structure of the MSE provides a clear window into how parallel paths interact. In a 2-bit residual architecture, the MSE between the teacher output $y_t$ and the student output $y_s=y_1+y_2$ can be decomposed. Using the Pearson correlation coefficient,\footnote{The relationship $\mathbb{E}[y_1y_2] \approx \sigma_1\sigma_2\operatorname{Corr}(y_1,y_2)$ relies on a zero-mean assumption for the path outputs. We empirically verify this, finding the omitted $\mathbb{E}[y_1]\mathbb{E}[y_2]$ term is less than 1\% of the covariance term and thus negligible.} this decomposition is:

\begin{equation}
\begin{aligned}
\label{eq:mse_decomp}
\operatorname{MSE}(y_t, y_s) &= \underbrace{(\mathbb{E}[y_t^2] + \mathbb{E}[y_1^2] + \mathbb{E}[y_2^2] - 2\mathbb{E}[y_t y_s])}_{C'} \\
&\quad + \underbrace{2\sigma_1 \sigma_2}_{\text{Path Amp.}} \cdot \underbrace{\operatorname{Corr}(y_1, y_2)}_{\text{Path Corr.}},
\end{aligned}
\end{equation}
where $C'$ represents the sum of correlation-independent error terms. This reveals a core principle: to minimize the MSE, the paths must be strongly \textit{negatively correlated}. A negative correlation transforms the interaction term into a substantial bonus that actively reduces the total loss, signifying effective error-cancellation.

To rigorously diagnose this true error-correction dynamic, we can re-examine the local MSE objective with respect to the residual error of the first path, $R_1 = y_t - y_1$, and the second path output, $y_2$. The objective $\mathbb{E}[(R_1 - y_2)^2]$ expands to:
\begin{equation}
\label{eq:residual_alignment}
\operatorname{MSE} \approx \sigma_{R_1}^2 + \sigma_{y_2}^2 - 2 \sigma_{R_1} \sigma_{y_2} \operatorname{Corr}(R_1, y_2)
\end{equation}
This formulation demonstrates that minimizing the MSE requires the second path to actively align with the residual error (\ie $\operatorname{Corr}(R_1, y_2) > 0$). A simple negative path-to-path correlation ($\operatorname{Corr}(y_1, y_2) < 0$) alone is insufficient, as it can be mechanically induced without capturing the true functional error.

To provide a concrete analysis, \Cref{tab:detailed_mse_decomposition} decomposes these MSE metrics across representative layers of Llama2-7B. The results are definitive. Standard QAT, which optimizes independent latent weights via a shared global gradient, suffers from \textbf{inter-path adaptation} (see \Cref{appendix:proofs}); it yields a path correlation close to zero, failing to meaningfully reduce the total error. Conversely, heuristic splitting methods like DB-LLM~\citep{chen2024dbllm} mechanically force a negative path correlation, but fundamentally fail to maximize residual alignment (\eg 0.26 in Layer 5), indicating that their paths do not actively track the actual functional error. MBOK~\citep{mbok2025} partially improves residual alignment over Standard QAT, but its path-freezing heuristic yields only weak path anti-correlation and limited loss-reducing covariance.

In contrast, RaBiT explicitly derives the second path from the dynamic residual on-the-fly. This structurally guarantees that the second path actively cancels the functional error, achieving the highest residual alignment (\eg 0.65 in Layer 5). This true error-correction transforms the interaction term into a significant loss-reducing bonus, systematically lowering the total MSE. This principled enforcement creates a more stable optimization landscape, leading to better generalization and superior performance.

% ===================== SECTION 4: METHOD =====================

\section{Method}
\label{sec:method}
We introduce RaBiT, a novel QAT framework that prevents interference between the parallel paths of stacked binary architectures. To achieve this, RaBiT enforces a clear error-correction role for each path using a novel \textbf{coupled training} loop and stabilizes the process with a \textbf{function-aware initialization} strategy. An overview of the RaBiT training framework is illustrated in \Cref{fig:rabit_overview}.

\subsection{The Residual Binarization Architecture}
\label{subsec:architecture}
To efficiently achieve low-bit precision (\eg 2-bit), we adopt a residual architecture built upon highly efficient binary building blocks.

\paragraph{Binary Building Blocks.}
The fundamental component is the dual-scale binarization framework. We define the approximation of a weight matrix $\hat{\W}$ using a notation that highlights the underlying element-wise scaling operations:
\begin{equation}
    \hat{\W} = \g \odot \B \odot \h.
\end{equation}
Here, $(\hat{\W})_{ij}$ is computed as $g_i B_{ij} h_j$. $\B \in \{-1,+1\}^{d_{\mathrm{out}}\times d_{\mathrm{in}}}$ represents the binary core matrix, while $\g \in \mathbb{R}^{d_{\mathrm{out}}}$ and $\h \in \mathbb{R}^{d_{\mathrm{in}}}$ are full-precision, per-channel scaling vectors. This formulation facilitates matmul-free efficiency. For an input vector $\x \in \mathbb{R}^{d_{\mathrm{in}}}$, the output $\y \in \mathbb{R}^{d_{\mathrm{out}}}$ is computed as $\y = \g \odot (\B(\h \odot \x))$, implementable via additions and subtractions without costly matrix multiplications.

\paragraph{Multi-bit Approximation via Stacking.}
To increase representational capacity (\eg to 2-bit) while maintaining efficiency, $k=2$ binary paths are stacked in parallel. The effective weight is the sum of binarized terms:
\begin{equation}
\label{eq:stacked-binary-weight}
\hat{\W}^{(k)} = \sum_{i=1}^{k} \hat{\W}_i = \sum_{i=1}^{k} \g_i \odot \B_i \odot \h_i.
\end{equation}
This architecture preserves matmul-free execution, as the forward pass accumulates outputs from each path. Note that while this defines the core bit-width (\eg 2-bit for $k=2$), the effective bit-width also includes the storage cost of the low-dimensional auxiliary scaling parameters $\g_i$ and $\h_i$ (\eg a nominal 2-bit model corresponds to $\sim$2.02 effective bits).

\subsection{Coupled Training for Co-Adaptation Mitigation}
\label{subsec:coupled_training}
To prevent inter-path co-adaptation, RaBiT diverges from standard methods that train independent latent weights for each binary path. Instead, a \textbf{single shared full-precision (FP) weight} $\W_{\mathrm{FP}}$ is used as an anchor for the entire residual structure.

\paragraph{The Coupled Forward Pass.} The mechanism relies on a dynamic forward pass. In a 2-bit configuration ($k=2$), binary core matrices $\B_1$ and $\B_2$ are not explicitly stored. Rather, they are re-computed during each pass from the shared weight $\W_{\mathrm{FP}}$ (\Cref{fig:rabit_overview}a). This procedure algorithmically enforces an error-compensation hierarchy. While binary cores are dynamically derived, scaling vectors $\{\g_i, \h_i\}$ remain as independent learnable parameters to capture path-specific magnitudes. The forward pass consists of 3 steps:
\begin{enumerate}[itemsep=1.5pt, topsep=0pt, leftmargin=*]
    \item \textbf{Path 1 Derivation}: The first binary core, $\B_1$, is determined by directly binarizing the shared weight: $\B_1 = \sign(\W_{\mathrm{FP}})$. This core is combined with corresponding learnable scaling vectors, $\g_1$ and $\h_1$, to reconstruct the first-path approximation $\hat{\W}_1 = \g_1 \odot \B_1 \odot \h_1$.
    \item \textbf{Residual Calculation}: The residual error, $\R_{1}$, is computed by subtracting the \textit{reconstructed} first path from the shared weight: $\R_{1} = \W_{\mathrm{FP}} - \hat{\W}_1$.
    \item \textbf{Path 2 Derivation}: The second binary core, $\B_2$, is determined by binarizing this calculated residual error: $\B_2 = \sign(\R_1)$. The final effective weight for the forward pass is the sum of the two reconstructed paths: $\hat{\W}^{(2)} = \hat{\W}_1 + (\g_2 \odot \B_2 \odot \h_2)$.
\end{enumerate}
A critical distinction is made between dynamically deriving binary cores $\B_i := \sign(\R_{i-1})$ and treating scaling vectors $\{\g_i, \h_i\}$ as independent parameters. This separation ensures computational feasibility and training stability. Recalculating optimal scales at every step--\eg via Singular Value Decomposition (SVD)--would impose prohibitive computational costs. By maintaining scales as learnable parameters, the optimizer applies state accumulation (\eg momentum) to fine-tune initialized values effectively (\cref{subsec:stable_init}). Such data-adaptive tuning enables the error-compensation hierarchy to learn optimal magnitudes for each path.

\paragraph{Backward Pass and Parameter Updates.} The backward pass facilitates stable parameter updates. Gradients from the loss $\mathcal{L}$ propagate to both the independent scaling vectors $\{\g_i,\h_i\}$ and the shared weight $\W_{\mathrm{FP}}$, as depicted in \Cref{fig:rabit_overview}c.
\begin{itemize}[itemsep=1.5pt, topsep=0pt, leftmargin=*]
    \item \textbf{Gradient for the Shared Weight}:
          To update the single shared weight $\mathbf{W}_{\mathrm{FP}}$, RaBiT employs an \textbf{effective-weight gradient}.
          This functions as a Straight-Through Estimator (STE) for the \textit{entire coupled derivation process}.
          The gradient is computed with respect to the final effective weight $\hat{\mathbf{W}}^{(k)} = \sum_i \hat{\mathbf{W}}_i$ and is directly passed to update $\mathbf{W}_{\mathrm{FP}}$. To ensure dimensional consistency for general cases ($d_{\mathrm{in}} \neq d_{\mathrm{out}}$), we formulate the update as:
          \begin{equation}
          \label{eq:RaBiT-grad-wfp}
          \nabla_{\mathbf{W}_{\mathrm{FP}}}
          \;\approx\;
          \nabla_{\hat{\mathbf{W}}^{(k)}}\mathcal{L}
          = \frac{\partial\mathcal{L}}{\partial \mathbf{Y}} \mathbf{X}^{\top}.
          \end{equation}
          In this context, $\mathcal{L}$ denotes the task loss, while $\mathbf{X} \in \mathbb{R}^{d_{\mathrm{in}} \times N}$ and $\mathbf{Y} \in \mathbb{R}^{d_{\mathrm{out}} \times N}$ represent the full input and output matrices for the mini-batch. By recomputing binary paths from the updated $\W_{\mathrm{FP}}$ at every step, the system forces each path to correct the most recent residual error, thereby preserving the error-compensation hierarchy.
    \item \textbf{Gradient for Learnable Scales}:
          The scaling vectors $\{\g_i,\h_i\}$ function as standard learnable parameters and receive gradients via the chain rule. For a mini-batch of size $N$, gradients are accumulated over each sample while treating the dynamic binary cores ($\B_i$) as constants:
            \begin{equation}
            \label{eq:RaBiT-grad-gh}
            \begin{split}
                \nabla_{\g_i} &= \sum_{n=1}^{N} \Delta_{n} \odot \bigl(\B_i (\h_i \odot \mathbf{x}_{n})\bigr), \\
                \nabla_{\h_i} &= \sum_{n=1}^{N} \bigl(\B_i^{\!\top}(\Delta_{n} \odot \g_i)\bigr) \odot \mathbf{x}_{n}.
            \end{split}
            \end{equation}
          Here, $\mathbf{x}_n$ denotes the input vector corresponding to the $n$-th column of $\mathbf{X}$, and $\Delta_{n} = (\partial\mathcal{L}/\partial \mathbf{y}_n)$ represents the upstream gradient from the layer's output vector $\mathbf{y}_n$.
\end{itemize}

During inference, the final binary cores $\{\B_i\}$ are derived from the trained $\W_{\mathrm{FP}}$ and frozen, allowing $\W_{\mathrm{FP}}$ to be discarded. This yields a highly efficient architecture where independent paths execute in a fully parallel, matmul-free manner. Furthermore, the single-weight design reduces memory usage for optimizer states by 50\%, addressing a primary bottleneck in LLM fine-tuning. The complete training procedure is detailed in \Cref{alg:training}.

\subsection{Stable Initialization for Functional Preservation}
\label{subsec:stable_init}
Quantization-aware training (QAT) in the 2-bit regime is sensitive to initialization, often trapping models in suboptimal minima~\citep{nagel2020up, llmqat2023}.
To address this, we implement a two-stage initialization process designed to preserve model \textit{functionality} rather than simply approximating weight values.

\paragraph{1. Iterative Residual SVID.}
The primary objective is to identify binary paths that jointly approximate the target matrix. Standard greedy decomposition is suboptimal as early choices irreversibly bias subsequent paths. Therefore, we employ \textbf{Iterative Residual Sign-Value-Independent Decomposition (SVID)}, a Gauss-Seidel style iteration facilitating path co-adaptation. The process refines scales $\{\g_i, \h_i\}$ and binary cores $\{\B_i\}$ for each path $i=1, \dots, k$ over iterations $t=1, \dots, T$ as follows:
\begin{equation}
\label{eq:init-iter}
\left\{
\begin{aligned}
    \mathbf{R}_i^{(t)} &\coloneqq \mathbf{W}_{\mathrm{FP}} - \textstyle \sum_{j<i} \hat{\mathbf{W}}_j^{(t)} \\
    &\quad \quad \quad - \textstyle \sum_{j>i} \hat{\mathbf{W}}_j^{(t-1)}, \\
    \mathbf{B}_i^{(t)}, \mathbf{g}_i^{(t)}, \mathbf{h}_i^{(t)} &\coloneqq \operatorname{SVID}(\mathbf{R}_i^{(t)}), \\
    \hat{\mathbf{W}}_i^{(t)} &\coloneqq \mathbf{g}_i^{(t)} \odot \mathbf{B}_i^{(t)} \odot \mathbf{h}_i^{(t)}.
\end{aligned}
\right.
\end{equation}
$\SVID(\cdot)$~\citep{onebit2024} extracts optimal per-channel scales via rank-1 SVD approximation on the magnitudes. In practice, this iterative process is applied to a preconditioned target matrix $\mathbf{W'}$ rather than the raw weights $\W_{\mathrm{FP}}$.

\paragraph{2. I/O Channel Importance-Scaled Preconditioning.}
To focus decomposition on functionally critical weight components, we precondition the full-precision weights before residual decomposition. 
This design is motivated by the local-sensitivity intuition underlying Fisher-based and K-FAC-style importance estimation~\citep{molchanov2019importance,martens2015optimizing}: weight errors on channels with large input activations or large output-gradient magnitudes have greater functional impact.
Drawing on recent methods for functional saliency preservation~\citep{dbf2025}, the raw weights $\W_{\mathrm{FP}}$ are therefore re-weighted to generate a target $\W'$. 
Using a calibration dataset, input activation magnitudes ($s_{\mathrm{in}}$) and output gradient magnitudes ($s_{\mathrm{out}}$) are computed as the maximum absolute values and normalized to re-weight the full-precision matrix:

\begin{equation}
\label{eq:init-calib}
\W' \;=\; \s_{\mathrm{out}}^{\alpha_{\mathrm{out}}} \odot \W_{\mathrm{FP}} \odot \s_{\mathrm{in}}^{\alpha_{\mathrm{in}}}.
\end{equation}

Following Iterative Residual SVID, the resulting scales are mapped back to the original domain: $\g_i = \s_{\mathrm{out}}^{-\alpha_{\mathrm{out}}} \odot \g_i'$ and $\h_i = \s_{\mathrm{in}}^{-\alpha_{\mathrm{in}}} \odot \h_i'$. This approach reduces initial task loss and stabilizes the QAT startup phase (see \Cref{alg:initialization}, \Cref{tab:initialization_analysis}, and \Cref{fig:kl_loss_svid_iter}).

\begin{table*}[t]
  \centering
  \caption{
    \textbf{Comparison with State-of-the-Art 2-3-bit Methods on Llama Models.}
    We report perplexity (PPL $\downarrow$) and zero-shot QA Average ($\uparrow$).
    For the 2-bit results, the best and runner-up are marked in \textbf{bold} and \underline{underlined}, respectively.
    RaBiT achieves state-of-the-art (SOTA) performance on Llama2-7B and Llama3-8B, while showing highly competitive results on Llama2-13B.
    }
  \setlength{\tabcolsep}{4.5pt}
  \renewcommand{\arraystretch}{1.12}
  \resizebox{0.90\textwidth}{!}{
      \begin{tabular}{lcccccccccccc}
\toprule
\multirow{2}{*}{\textbf{Methods}} 
  & \multicolumn{4}{c}{\textbf{Llama2-7B}} 
  & \multicolumn{4}{c}{\textbf{Llama2-13B}} 
  & \multicolumn{4}{c}{\textbf{Llama3-8B}} \\
\cmidrule(lr){2-5}\cmidrule(lr){6-9}\cmidrule(lr){10-13}
  & \textbf{Bit} & \textbf{Wiki2$\downarrow$} & \textbf{C4$\downarrow$} & \textbf{QA Avg$\uparrow$}
  & \textbf{Bit} & \textbf{Wiki2$\downarrow$} & \textbf{C4$\downarrow$} & \textbf{QA Avg$\uparrow$}
  & \textbf{Bit} & \textbf{Wiki2$\downarrow$} & \textbf{C4$\downarrow$} & \textbf{QA Avg$\uparrow$} \\
\midrule
Baseline
  & 16    & 5.12 & 6.63 & 62.26
  & 16    & 4.57 & 6.05 & 65.46 
  & 16    & 5.75 & 8.32 & 68.66 \\
GPTQ                    
  & 2.1 & 50.75 & 36.76 & 39.16 
  & 2.1 & 43.84 & 23.07 & 43.72 
  & 2  & 1.21e3 & 4.97e2   & 35.59 \\
EfficientQAT                   
  & 2.1  & 6.42   & 8.34   & 57.75
  & 2.1  & 5.58   & 7.40   & 62.07
  & 2.1  & 8.75   & 12.09   & 60.63 \\
\midrule
AQLM                    
  & 2.3  & 6.29 & 8.56 & 58.57 
  & 2.2  & 5.41 & 7.20 & 61.58 
  & 2.3   & 7.23 & 10.32 & 64.12\\
  % wiki 6.76(8192)
QuIP\#                  
  & 2  & 6.19 & 8.16 & 58.23 % & 2.02
  & 2  & 5.35 & 7.20 & 61.96 
  & 2  & 8.70 & 12.04 & 63.89 \\
QTIP                   
  & 2  & \underline{5.86}   & \underline{7.73}   & \underline{58.97}
  & 2  & \textbf{5.11}   & \textbf{6.85}   & \textbf{62.92}
  & 2  & \underline{7.52} & \underline{10.76} & \underline{63.88} \\
\midrule
\multirow{2}{*}{BitStack} 
  & 3 & 6.91 & 9.10 & 56.54
  & 3 & 5.90  & 7.86 & 61.06
  & 3 & 12.38 & 17.51 & 58.41  \\
  & 2 & 29.97  & 34.91 & 40.12 
  & 2 & 67.98  & 72.60 & 39.38
  & 2 & 2.75e3  & 1.93e3 & 36.21  \\
DB-LLM                  
  & 2  & 7.23 & 9.62 & 55.12 % 2.01 
  & 2  & 6.19 & 8.38 & 59.41 % 2.01 
  & 2 & 12.08 & 16.80 & 50.92 \\
\multirow{2}{*}{MBOK} 
  & 3 & 6.13  & 8.13  & 54.63 
  & 3 & 5.14  & 6.94  & 62.73 
  & 3 & 7.81  & 11.29  & 61.08  \\
  & 2 & 6.99 & 9.38  & 53.63 
  & 2 & 5.76  & 7.89  & 60.58 
  & 2 & 10.74  & 14.61  & 54.41  \\    
\multirow{2}{*}{DBF}
  & 2.3 & 5.81  & 7.69  & 59.84
  & 2.3 & 5.15  & 6.85  & 62.53
  & 2.3 & 7.22  & 10.34  & 64.84  \\
  & 2 & 6.10  & 8.05  & 58.42
  & 2 & 5.33  & 7.13  & 61.53
  & 2 & 7.78  & 10.99  & 62.90  \\
\midrule
\multirow{2}{*}{RaBiT (Ours)} 
  & 3 & 5.36  & 7.06  & 63.05 
  & 3 & 4.84  & 6.51  & 64.09
  & 3 & 6.58  & 9.54  & 65.61  \\
  & 2 & \textbf{5.78}  & \textbf{7.64}  & \textbf{61.51} 
  & 2 & \underline{5.15}  & \underline{6.95}  & \underline{62.10} 
  & 2 & \textbf{7.34}  & \textbf{10.52} & \textbf{64.13} \\
\bottomrule
\end{tabular}

      \label{tab:main_results}
    }
\end{table*}

% ===================== SECTION 5: EXPERIMENTS =====================

\section{Experiments}
\subsection{Experimental Settings}
\label{sec:exp_settings}
\paragraph{Setup.}
The RaBiT framework is evaluated using the Llama2, Llama3, and Gemma3 model families~\citep{llama2_2023,llama3_2024,gemma3}. For quantization-aware training (QAT), a calibration dataset of 200 million tokens is derived from the combined WikiText-2 and C4 datasets~\citep{binarymos2024}. Performance assessment relies primarily on perplexity (PPL) calculated on validation sets with a context length of 4096.

\paragraph{Evaluation Benchmarks.}
We report the average zero-shot accuracy (QA Avg.) across five standard reasoning benchmarks, including HellaSwag, PIQA, WinoGrande, ARC-e and ARC-c~\citep{hellaswag2019, piqa2020, winogrande2021, arc2018}. Detailed breakdowns for these standard benchmarks are provided in \Cref{appendix:ext_results}. Furthermore, to test the model's robustness on harder tasks involving complex reasoning and instruction following, we include evaluation on Big Bench Hard (BBH), GPQA, MMLU-Pro, and IFEval~\citep{suzgun2023bbh, rein2024gpqa, wang2024mmlupro, zhou2023ifeval}. These challenging benchmarks serve to highlight the functional preservation of the quantized models beyond basic language modeling metrics.

\begin{table*}[t]
  \centering
  \caption{\textbf{Comparison with State-of-the-Art 2-bit Methods on Gemma3 Models.} We report perplexity (PPL $\downarrow$) and zero-shot QA Average ($\uparrow$). The context length is 4096. RaBiT achieves state-of-the-art (SOTA) or highly competitive performance across the Gemma3 1B, 4B, and 12B suite.}
  \setlength{\tabcolsep}{4.5pt}
  \renewcommand{\arraystretch}{1.12}
  \resizebox{0.90\textwidth}{!}{
      \begin{tabular}{lcccccccccccc}
\toprule
\multirow{2}{*}{\textbf{Methods}} 
  & \multicolumn{4}{c}{\textbf{Gemma3-1B}} 
  & \multicolumn{4}{c}{\textbf{Gemma3-4B}} 
  & \multicolumn{4}{c}{\textbf{Gemma3-12B}} \\
\cmidrule(lr){2-5}\cmidrule(lr){6-9}\cmidrule(lr){10-13}
  & \textbf{Bit} & \textbf{Wiki2$\downarrow$} & \textbf{C4$\downarrow$} & \textbf{QA Avg$\uparrow$}
  & \textbf{Bit} & \textbf{Wiki2$\downarrow$} & \textbf{C4$\downarrow$} & \textbf{QA Avg$\uparrow$}
  & \textbf{Bit} & \textbf{Wiki2$\downarrow$} & \textbf{C4$\downarrow$} & \textbf{QA Avg$\uparrow$} \\
\midrule
Baseline 
  & 16    & 9.80 & 13.69 & 57.82 & 16    & 6.88 & 10.44 & 67.60 & 16    & 5.50 & 9.28 & 73.45 \\
DBF
  & 2 & 13.28  & 17.57  & 51.98
  & 2 & 8.72  & 12.71  & 60.91
  & 2 & 6.97  & 10.60  & 68.37  \\
QTIP                   
  & 2  & 13.14   & 17.36   & 50.30
  & 2  & 8.31   & 12.21   & \textbf{63.47}
  & 2  & \textbf{6.65} & 10.25 & \textbf{69.69} \\
\midrule
RaBiT (Ours)
  & 2 & \textbf{11.27} & \textbf{15.54}  & \textbf{53.18}
  & 2 & \textbf{8.09}  & \textbf{11.91}  & 62.21
  & 2 & 6.66  & \textbf{10.18}  & 68.85 \\
\bottomrule
\end{tabular}

      \label{tab:gemma3_results}
    }
\end{table*}

\begin{table*}[t]
  \centering
  \caption{\textbf{Comparison with State-of-the-Art 2-bit Methods on Qwen3 Models.} We report perplexity (PPL $\downarrow$) and zero-shot QA Average ($\uparrow$). RaBiT achieves state-of-the-art (SOTA) performance on Qwen3-1.7B and Qwen3-4B, while showing highly competitive results on Qwen3-8B.}
  \setlength{\tabcolsep}{4.5pt}
  \renewcommand{\arraystretch}{1.12}
  \resizebox{0.90\textwidth}{!}{
      \begin{tabular}{lcccccccccccc}
\toprule
\multirow{2}{*}{\textbf{Methods}} 
  & \multicolumn{4}{c}{\textbf{Qwen3-1.7B}} 
  & \multicolumn{4}{c}{\textbf{Qwen3-4B}} 
  & \multicolumn{4}{c}{\textbf{Qwen3-8B}} \\
\cmidrule(lr){2-5}\cmidrule(lr){6-9}\cmidrule(lr){10-13}
  & \textbf{Bit} & \textbf{Wiki2$\downarrow$} & \textbf{C4$\downarrow$} & \textbf{QA Avg$\uparrow$}
  & \textbf{Bit} & \textbf{Wiki2$\downarrow$} & \textbf{C4$\downarrow$} & \textbf{QA Avg$\uparrow$}
  & \textbf{Bit} & \textbf{Wiki2$\downarrow$} & \textbf{C4$\downarrow$} & \textbf{QA Avg$\uparrow$} \\
\midrule
Baseline 
  & 16 & 7.12 & 10.55 & 64.21 & 16 & 5.92 & 9.45 & 70.12 & 16 & 5.15 & 8.50 & 74.88 \\
DBF
  & 2 & 11.89 & 15.95 & 59.25
  & 2 & 9.65 & 13.55 & 66.01
  & 2 & 8.36 & 12.00 & 70.48 \\
QTIP                   
  & 2 & 11.19 & 15.37 & 56.82
  & 2 & 8.76 & 12.59 & 66.13
  & 2 & 7.59 & 11.13 & \textbf{70.93} \\
\midrule
RaBiT (Ours)
  & 2 & \textbf{10.19} & \textbf{14.49} & \textbf{60.38}
  & 2 & \textbf{8.27} & \textbf{12.08} & \textbf{66.66}
  & 2 & \textbf{7.47} & \textbf{11.00} & 70.25 \\
\bottomrule
\end{tabular}

      \label{tab:qwen3_results}
    }
\end{table*}

\paragraph{Training Details.}
The training process uses a QAT framework integrated with Knowledge Distillation (KD)~\citep{distilling2015,llmqat2023}, employing the full-precision model as the teacher. The objective function combines the Kullback–Leibler (KL) divergence loss on output logits with intermediate mean squared error (MSE) losses: $\mathcal{L}_\mathrm{total} = \mathcal{L}_\mathrm{kl} + \gamma \sum_{i} \mathcal{L}_{\mathrm{inter}, i}$. We set $\gamma=100$ for Llama models and $\gamma=0$ for Gemma3 to mitigate instability caused by large activation ranges~\citep{unsloth2025gemma3}. Models are trained for 6 epochs using the Muon optimizer~\citep{jordan2024muon} and our proposed function-aware initialization. We provide comprehensive hyperparameter configurations in \Cref{appendix:hyper_parameters}.

\paragraph{Baselines.}
We compare RaBiT against a broad spectrum of state-of-the-art 2- to 3-bit quantization methods. These include: (1) standard post-training quantization methods such as GPTQ and EfficientQAT~\citep{frantar2022gptq, chen2025efficientqat}; (2) high-accuracy Vector Quantization (VQ) approaches including AQLM, QuIP\#, and QTIP~\citep{egiazarian2024extreme, tseng2024quip, tseng2024qtip}, which typically incur high hardware overhead; and (3) hardware-efficient binary and residual methods such as BitStack, DB-LLM, MBOK, and DBF~\citep{bitstack2024, chen2024dbllm, mbok2025, dbf2025}\footnote{We rely on our re-implementation for DB-LLM and MBOK.}, which serve as the most direct architectural comparisons.

\subsection{Main Results}
The empirical results, summarized in \Cref{tab:main_results,tab:gemma3_results,tab:qwen3_results,tab:zeroshot_results}, demonstrate that RaBiT consistently redefines the state-of-the-art for 2-bit quantization, exhibiting superior performance across all tested model architectures and datasets.

\paragraph{Dominance over Hardware-Efficient Methods.}
RaBiT significantly outperforms existing matmul-free binary and residual methods, highlighting the efficacy of the proposed optimization strategy. On the Llama2-7B benchmark, RaBiT achieves a WikiText-2 perplexity (PPL) of 5.78. This represents a substantial improvement over direct competitors such as MBOK (6.99 PPL) and DBF (6.10 PPL). The performance gap becomes even more pronounced on larger models and more complex datasets, which underscores the severe performance penalty incurred by the inter-path adaptation issues that prior methods fail to address. Notably, we observed that methods lacking our principled coupled design, such as BitStack, suffer from catastrophic instability on newer architectures like Llama3-8B (degrading to 2.75e3 PPL), whereas RaBiT successfully maintains robust convergence and high fidelity.

\begin{table}[t!]
  \centering
    \caption{\textbf{Zero-Shot Evaluation on Challenging Benchmarks.}
    We report BBH, GPQA, MMLU-Pro, and IFEval results for Llama2-13B and Llama3-8B.}
  \label{tab:zeroshot_results}
  \resizebox{\columnwidth}{!}{%
    \renewcommand{\arraystretch}{1.35}
\begin{tabular}{l c c c @{\hspace{8pt}} c c c}

  \toprule
  & \multicolumn{3}{c}{\textbf{Llama2-13B}} & \multicolumn{3}{c}{\textbf{Llama3-8B}} \\
  \cmidrule(lr){2-4} \cmidrule(lr){5-7}
  \textbf{Methods} &
  \textbf{Baseline} & \textbf{QTIP} & \textbf{RaBiT} &
  \textbf{Baseline} & \textbf{QTIP} & \textbf{RaBiT} \\
  \midrule
  Bit & 16 & 2 & 2 & 16 & 2 & 2 \\
  \midrule
  BBH            & 40.99 & 33.36 & \textbf{37.72} & 45.84 & 36.27 & \textbf{36.78} \\
  GPQA           & 27.45 & 25.75 & \textbf{26.77} & 30.85 & 24.56 & \textbf{28.62} \\
  MMLU-Pro           & 24.81 & 16.69 & \textbf{19.44} & 14.91 & 19.24 & \textbf{19.65} \\
  IFEval         & 23.83 & \textbf{25.74} & 24.63 & 32.51 & \textbf{15.60} & 15.42 \\
  \midrule
  \textbf{Average}  & 29.27 & 25.38 & \textbf{27.14} & 31.03 & 23.92 & \textbf{25.12} \\
  \bottomrule
\end{tabular}
\renewcommand{\arraystretch}{1.0}
  }
  \vspace{-10pt}
\end{table}

\paragraph{Achieving VQ-Level Accuracy with Binary Efficiency.}
Importantly, RaBiT effectively narrows the accuracy gap with hardware-intensive Vector Quantization (VQ) methods, achieving comparable inference-time performance while preserving the efficient matmul-free architecture. On Llama2-7B, RaBiT's perplexity of 5.78 edges out the leading VQ method, QTIP (5.86 PPL). This trend holds for downstream reasoning tasks as well; RaBiT achieves a 61.51\% average zero-shot accuracy on Llama2-7B, surpassing QTIP's 58.97\% and demonstrating superior functional preservation. 

This robustness is further corroborated by results on Llama3-8B, where RaBiT maintains strong performance (7.34 PPL) while other VQ methods like QuIP\# suffer from severe degradation (8.70 PPL). As shown in \Cref{tab:gemma3_results} and \Cref{tab:qwen3_results}, this advantage generalizes beyond the Llama family. On the Gemma3 (1B/4B/12B) and Qwen3 (1.7B/4B/8B) suites, RaBiT consistently delivers competitive perplexity and robust zero-shot QA accuracy. For instance, on the instruction-tuned Qwen3-4B, RaBiT achieves a 66.66\% average zero-shot accuracy, outperforming both DBF (66.01\%) and QTIP (66.13\%), while simultaneously improving Wiki2 perplexity (8.27 vs. 8.76 for QTIP). Even on the larger Qwen3-8B model, RaBiT remains highly competitive with VQ methods, demonstrating the scalability and strong functional preservation of our approach across diverse architectures.

\paragraph{Performance on Harder Tasks.}
To further validate the model's capabilities in complex scenarios, we evaluated performance on challenging benchmarks including BBH, GPQA, MMLU-Pro, and IFEval. As presented in \Cref{tab:zeroshot_results}, RaBiT outperforms QTIP on average (27.14 vs. 25.38 on Llama2-13B) and retains substantially more capability relative to the full-precision baseline than prior quantization techniques. This indicates that RaBiT's coupled training strategy preserves the delicate internal representations required for advanced reasoning and instruction following, which are often lost in standard binary quantization.

\begin{table}[t!]
    \centering
    \caption{\textbf{Ablation on RaBiT} (Llama2-7B PPL). The analysis isolates the impact of Iterative Residual SVID (\textbf{I}) and I/O Channel Importance-Scaled Preconditioning (\textbf{S}).}
    \label{tab:ablation-RaBiT}
    \begin{small}
        \begin{center}
            \resizebox{0.8\columnwidth}{!}{
                \begin{tabular}{@{}ccccc@{}}
\toprule
\textbf{Training Method} & \textbf{I} & \textbf{S} & \textbf{WikiText-2 $\downarrow$} \\
\midrule
\multirow{4}{*}{\makecell[c]{Standard QAT}}
& \xmark & \xmark & 6.55 \\ % Baseline: Greedy SVID
& \cmark & \xmark & 6.21 \\
& \xmark & \cmark & 6.31 \\
& \cmark & \cmark & 6.18 \\ 
\midrule
\multirow{4}{*}{\makecell[c]{\textbf{Coupled QAT}\\\textbf{(RaBiT)}}}
& \xmark & \xmark & 5.84 \\
& \cmark & \xmark & 5.80 \\
& \xmark & \cmark & 5.81 \\
& \cmark & \cmark & \textbf{5.78} \\ 
\bottomrule
\end{tabular}
            }
        \end{center}
    \end{small}
    \vspace{-5pt}
\end{table}

\begin{table*}[ht]
\centering
\caption{
    \textbf{Inference Performance Analysis on NVIDIA RTX 4090.} Kernel latency for key Llama2-7B/13B layers is measured after warm-up iterations and therefore reflects the efficiency of the packed kernel when weights are served from cache, rather than cold-DRAM bandwidth. We also report Llama2-7B end-to-end decoding throughput for a 256-token generation as the primary practical system-level metric. RaBiT shows superior efficiency at both the kernel and system levels.
}
\label{tab:inference_speed}
\small
\resizebox{0.9\textwidth}{!}{
    \begin{tabular}{lc ccccc c}
\toprule
\multirow{2.5}{*}{\textbf{Method}} & \multirow{2.5}{*}{\textbf{Bit}} & \multicolumn{4}{c}{\textbf{Kernel-Level Latency ($\mu$s) $\downarrow$}} &  \textbf{End-to-End Decoding} \\
\cmidrule(lr){3-6}
& & \makecell{4096$\times$4096 \\ \footnotesize(q\_proj, 7B)} & \makecell{11008$\times$4096 \\ \footnotesize(gate\_proj, 7B)} & \makecell{5120$\times$5120 \\ \footnotesize(q\_proj, 13B)} & \makecell{13824$\times$5120 \\ \footnotesize(gate\_proj, 13B)} & \textbf{Throughput (tok/s) $\uparrow$} \\
\midrule
FP16 & 16 & 17.15 (1.00$\times$) & 70.37 (1.00$\times$) & 17.85 (1.00$\times$) & 122.90 (1.00$\times$) & 64.96 (1.00$\times$) \\
\midrule
\multirow{2}{*}{DBF} 
& 2.3 & 12.66 (1.35$\times$) & 28.43 (2.48$\times$) & 14.72 (1.21$\times$) & 31.87 (3.86$\times$) & 157.66 (2.43$\times$) \\
& 2 & 11.47 (1.50$\times$) & 20.90 (3.37$\times$) & 14.08 (1.27$\times$) & 29.58 (4.15$\times$) & 175.21 (2.70$\times$) \\
\midrule
\multirow{2}{*}{QTIP} 
& 3 & 24.04 (0.71$\times$) & 37.08 (1.90$\times$) & 36.22 (0.49$\times$) & 49.97 (2.46$\times$) & 153.59 (2.36$\times$) \\
& 2 & 23.40 (0.73$\times$) & 42.40 (1.66$\times$) & 37.46 (0.48$\times$) & 59.22 (2.08$\times$) & 171.74 (2.64$\times$) \\
\midrule
\multirow{2}{*}{\textbf{RaBiT (Ours)}} 
& 3 & \textbf{8.15 (2.10$\times$)} & \textbf{17.13 (4.11$\times$)} & \textbf{9.90 (1.80$\times$)} & \textbf{22.36 (5.50$\times$)} & \textbf{191.63 (2.95$\times$)} \\
& 2 & \textbf{7.72 (2.22$\times$)} & \textbf{15.71 (4.48$\times$)} & \textbf{8.33 (2.14$\times$)} & \textbf{17.50 (7.02$\times$)} & \textbf{\THROUGHPUT (\SPEEDUP)} \\
\bottomrule
\end{tabular}

}
\end{table*}

\subsection{Ablation Studies}
\label{sec:ablation_study}
\subsubsection{Component-wise Contribution Analysis}
We performed an ablation study to analyze the contributions of RaBiT's core components Coupled QAT, Iterative SVID (\textbf{I}), and I/O Channel Importance-Scaled Preconditioning (\textbf{S}), with results in \Cref{tab:ablation-RaBiT}.
The analysis clearly shows that \textbf{Coupled QAT} is the most critical performance factor. Simply switching from Standard QAT (6.55 PPL) to Coupled QAT reduces the perplexity to 5.84, confirming that resolving inter-path adaptation yields the largest gain.
Our initialization methods (\textbf{I} and \textbf{S}) provide further essential improvements. While they offer a significant boost to the baseline Standard QAT, their role within the powerful Coupled QAT framework is to provide the final, crucial fine-tuning needed to reach the optimal 5.78 PPL. This synergy between a robust training method and a function-aware initialization is key to state-of-the-art performance of RaBiT.

\begin{figure}[t!]
  \centering
  \begin{subfigure}[b]{0.90\linewidth}
    \centering
    \includegraphics[width=\linewidth]{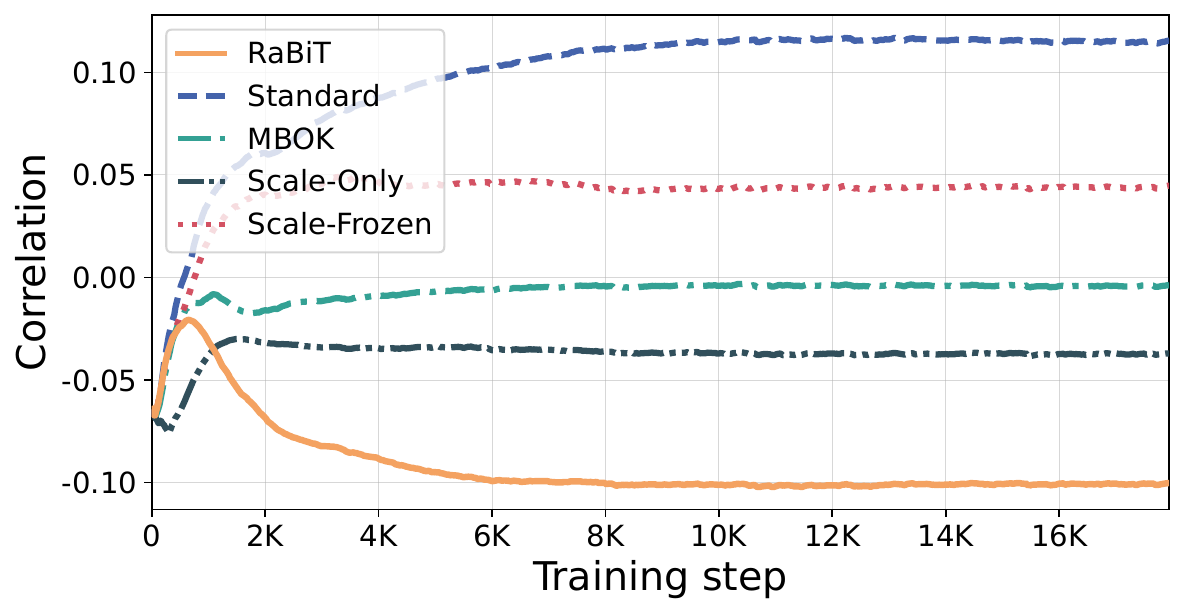}
    \vspace{-15pt}
    \caption{Inter-Path Correlation}
    \label{fig:inter_path_corr}
  \end{subfigure}
  \begin{subfigure}[b]{0.90\linewidth}
    \centering
    \includegraphics[width=\linewidth]{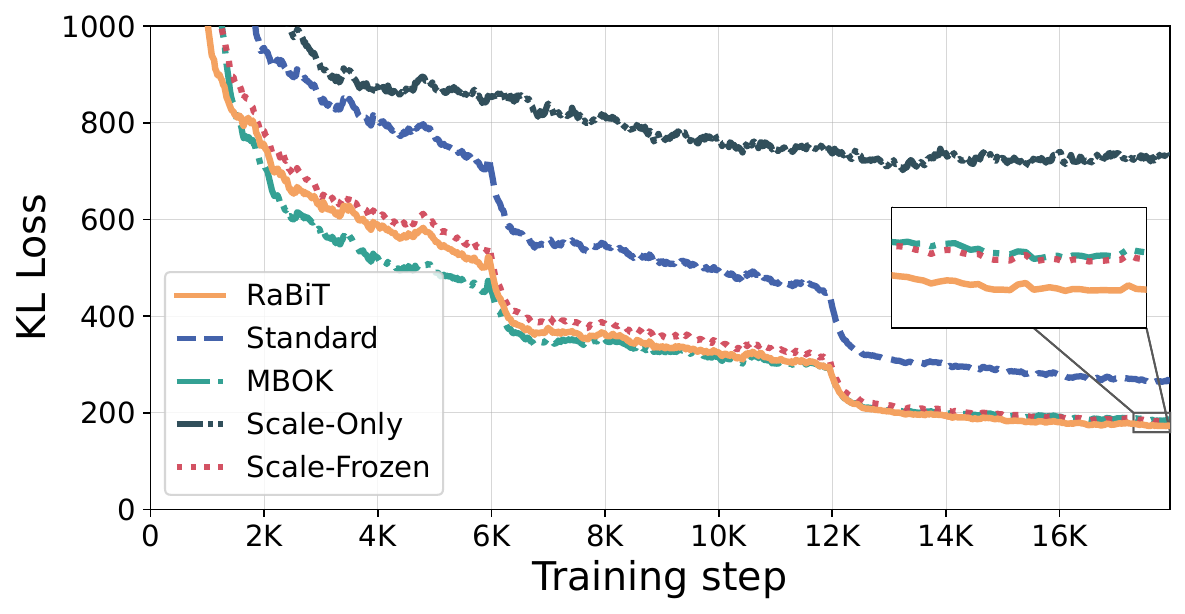}
    \vspace{-15pt}
    \caption{Training Loss}
    \label{fig:training_loss}
  \end{subfigure}
  \caption{\textbf{Visualization of Coupled Training Dynamics.} 
    \textbf{(a) Inter-Path Correlation:} RaBiT enforces a negative inter-path correlation, indicating effective error-correction, whereas Standard QAT leads to positive correlation (co-adaptation). 
    \textbf{(b) Training Loss:} This structural advantage directly translates to a lower and more stable training loss for RaBiT, demonstrating its superior optimization path.}
  \label{fig:corr_and_loss}
  \vspace{-5pt}
\end{figure}

\subsubsection{Analysis of Coupled Training Dynamics}
To empirically validate that coupled training resolves co-adaptation, we conducted a controlled experiment comparing RaBiT to four variants: (1) Standard QAT (independent latent weights), (2) MBOK (frozen primary binary core, mimicking the path-freezing heuristic of \citet{mbok2025} within our training loop), (3) Scale-only (frozen binary cores), and (4) Scale-frozen (RaBiT with frozen scales). To strictly isolate architectural dynamics from optimizer-specific effects, we standardize the initialization and optimizer (Muon) across all variants.

\Cref{fig:corr_and_loss} reveals the resulting training dynamics. As theorized, RaBiT successfully maintains a stable negative inter-path correlation, enforcing the error-correction hierarchy (\Cref{fig:inter_path_corr}). In contrast, Standard QAT develops a strong positive correlation, confirming that a shared global gradient induces harmful redundancy. The constrained variants (MBOK, Scale-frozen) fail to establish a strong anti-correlation, limiting their optimization potential. This structural advantage directly translates to model functionality, as shown by the training loss curves (\Cref{fig:training_loss}). RaBiT achieves the lowest and most stable loss, while the co-adaptation in Standard QAT and the incomplete optimization of the other variants lead to significantly higher loss. This analysis confirms that sequential optimization of all parameters algorithmically prevents co-adaptation and results in its superior performance.

\subsection{Inference Performance}
\label{subsec:hw_efficiency}
RaBiT not only achieves state-of-the-art accuracy but also delivers exceptional inference speed by leveraging its parallelizable matmul-free binary architecture. As shown in \Cref{tab:inference_speed}, 2-bit RaBiT achieves a \SPEEDUP speed-up in end-to-end decoding throughput over the FP16 baseline, on an NVIDIA RTX 4090.

This performance gain stems from two key advantages. First, the 8$\times$ reduction in model size (2-bit vs. 16-bit) dramatically lowers memory bandwidth requirements, which is the primary bottleneck in the autoregressive decoding phase. Second, unlike VQ methods, RaBiT avoids hardware-unfriendly overheads like lookup tables or rotations. Its simple architecture of additions and element-wise scaling translates to higher hardware utilization. This is evident in our kernel-level benchmarks, where RaBiT's specialized kernels exhibit consistently lower latency than both the FP16 baseline and QTIP's VQ kernels. By eliminating computational complexity, RaBiT ensures that theoretical memory savings translate directly into real-world speed, delivering a solution that is both accurate and genuinely efficient. Further details on our kernel design and additional performance benchmarks are provided in \Cref{appendix:kernel_design} and \Cref{appendix:more_comparisons}, respectively.

% ===================== SECTION 6: CONCLUSION =====================
\section{Conclusion}
\label{sec:conclusion}
This study addresses the trade-off between accuracy and hardware efficiency in 2-bit large language model (LLM) quantization through the introduction of RaBiT. We identified inter-path adaptation as a primary bottleneck that compromises the error-compensation structure in residual binarization. To mitigate this, the proposed framework employs on-the-fly residual coupling, a mechanism that prevents structural breakdown during training and ensures that the expressive capacity of the model is effectively used. Furthermore, to address the instability inherent in extreme quantization, we introduced a function-aware initialization strategy that facilitates stable convergence. Experimental results indicate that RaBiT achieves state-of-the-art performance at 2-bit precision, outperforming both existing binary methods and hardware-intensive Vector Quantization (VQ) approaches. By establishing these capabilities, this work offers a viable pathway for the efficient deployment of high-performance low-bit LLMs and serves as a scalable foundation for future research.

% % Acknowledgements should only appear in the accepted version.
\section*{Acknowledgments}

We thank Drs. Heonjae Ha and SangJeong Lee for their guidance, support and valuable feedback throughout this project.

\section*{Impact Statement}

This paper presents RaBiT, a framework for the extreme 2-bit quantization of LLMs. By drastically reducing computational requirements, our work is environmentally friendly and democratizes access, enabling high-performance models to run on consumer hardware (\eg RTX 4090). This efficiency facilitates local deployment, thereby enhancing user privacy and data sovereignty compared to cloud-based solutions. However, such accessibility also introduces dual-use risks by potentially enabling malicious applications outside controlled environments. Additionally, while our method preserves general capabilities, the specific effects of extreme compression on safety alignment remain an important area for future investigation.

\bibliography{icml2026}
\bibliographystyle{icml2026}

%%%%%%%%%%%%%%%%%%%%%%%%%%%%%%%%%%%%%%%%%%%%%%%%%%%%%%%%%%%%%%%%%%%%%%%%%%%%%%%
%%%%%%%%%%%%%%%%%%%%%%%%%%%%%%%%%%%%%%%%%%%%%%%%%%%%%%%%%%%%%%%%%%%%%%%%%%%%%%%
% APPENDIX
%%%%%%%%%%%%%%%%%%%%%%%%%%%%%%%%%%%%%%%%%%%%%%%%%%%%%%%%%%%%%%%%%%%%%%%%%%%%%%%
%%%%%%%%%%%%%%%%%%%%%%%%%%%%%%%%%%%%%%%%%%%%%%%%%%%%%%%%%%%%%%%%%%%%%%%%%%%%%%%
\newpage
\appendix
\onecolumn

\crefalias{section}{appendix}
\crefalias{subsection}{appendix}
\crefname{appendix}{Appendix}{Appendices}

\section{Mathematical Analysis of Training Dynamics}
\label{appendix:proofs}

This section provides a mathematical analysis of the training dynamics for residual binary architectures. We demonstrate why Standard QAT is prone to \textbf{inter-path adaptation}, where paths become redundant. In contrast, we show how RaBiT's coupled training mechanism structurally enforces an \textbf{error-correcting hierarchy} and is superior to other heuristic solutions.

\begin{proposition}[Inter-Path Adaptation in Standard QAT]
\label{prop:independent_qat_tendency}
In a Standard QAT scheme where two paths ($\hat{\W}_1, \hat{\W}_2$) are updated from their respective latent weights ($\W_1, \W_2$) using a shared global gradient $\mathbf{G} = \nabla_{\hat{\W}_1+\hat{\W}_2}\mathcal{L}$, the paths have a persistent tendency to become positively correlated, leading to redundancy.
\end{proposition}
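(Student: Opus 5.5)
The argument tracks how the two latent weights, and hence the two effective paths, evolve under the shared update. Under STE with a plain gradient step of size $\eta$, Standard QAT gives
\[
\W_1^{(t+1)} = \W_1^{(t)} - \eta\mathbf{G}^{(t)}, \qquad \W_2^{(t+1)} = \W_2^{(t)} - \eta\mathbf{G}^{(t)}.
\]
The STE treats $\partial\hat{\W}_i/\partial\W_i$ as the identity, possibly masked or scaled per path. Both latent weights therefore receive the \emph{same} increment. I unroll this over $T$ steps, writing $\mathbf{S}_T = \sum_{t<T}\mathbf{G}^{(t)}$, to get
\[
\W_i^{(T)} = \W_i^{(0)} - \eta\mathbf{S}_T, \qquad i=1,2.
\]
This is the central structural fact. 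The difference $\W_1^{(T)}-\W_2^{(T)} = \W_1^{(0)}-\W_2^{(0)}$ is invariant, while the common component $-\eta\mathbf{S}_T$ is shared by both paths.

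Next I compute the correlation of the latent weights under a simple statistical model. I treat entries as centered random variables, with the initial residual-style decomposition giving $\operatorname{Cov}(\W_1^{(0)},\W_2^{(0)}) = c_0$, typically $c_0 \le 0$. I further assume the accumulated gradient is weakly correlated with the initial weights, so the cross terms are $o(\eta^2\|\mathbf{S}_T\|^2)$. Then
\[
\operatorname{Cov}(\W_1^{(T)},\W_2^{(T)}) = c_0 + \eta^2\operatorname{Var}(\mathbf{S}_T) - \eta\bigl(\operatorname{Cov}(\W_1^{(0)},\mathbf{S}_T)+\operatorname{Cov}(\W_2^{(0)},\mathbf{S}_T)\bigr).
\]
The shared term $\eta^2\operatorname{Var}(\mathbf{S}_T)$ is nonnegative and grows with training whenever gradients do not fully cancel. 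Its growth is linear in $T$ for noisy gradients, like a random walk, and faster for persistent drift. The correlation therefore tends toward $+1$, since both variances are dominated by the same $\eta^2\operatorname{Var}(\mathbf{S}_T)$. This is the ``persistent tendency'' claimed in \Cref{prop:independent_qat_tendency}.

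I then pass from latent weights to the binarized paths $\hat{\W}_i = \g_i\odot\sign(\W_i)\odot\h_i$. One option is a Sheppard/arcsine-type relation for jointly Gaussian entries, $\mathbb{E}[\sign(a)\sign(b)] = \tfrac{2}{\pi}\arcsin\rho$, which is monotone in $\rho$. The other is simply the observation that sign agreement increases with $\rho$. Either way, the positive correlation of the latents carries over to $\B_1,\B_2$. With fixed positive scales it then carries over to the outputs $y_i = \hat{\W}_i\x$, since $\operatorname{Corr}(y_1,y_2)$ inherits the sign of the weight correlation under isotropic inputs. Plugging this into \eqref{eq:mse_decomp} shows that the interaction term $2\sigma_1\sigma_2\operatorname{Corr}(y_1,y_2)$ becomes a penalty rather than a bonus. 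In terms of \eqref{eq:residual_alignment}, $y_2$ tracks the same direction $-\eta\mathbf{S}_T$ as $y_1$ rather than the residual $R_1$, which is the redundancy.

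The main obstacle is controlling the cross terms $\operatorname{Cov}(\W_i^{(0)},\mathbf{S}_T)$ and justifying the dominance of $\operatorname{Var}(\mathbf{S}_T)$. Gradients depend on the current weights, and optimizers like Muon are not plain SGD. I would handle this by stating the result as a tendency rather than a guarantee. Concretely, I would use a linearized or small-step regime, or bound the cross terms by Cauchy--Schwarz as $O(\eta\sqrt{T})$ against the $\Theta(\eta^2 T)$ shared term under a gradient-noise lower bound. For adaptive optimizers, the key point still holds as long as the preconditioner is applied identically to the identical gradient: both latents then receive identical updates, so the same argument goes through.
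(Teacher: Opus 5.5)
Your mechanism is the paper's, developed further, and at the latent level it supports the proposition's heuristic claim at least as well as the paper does. The paper's proof is a single step. It expands $\Delta\langle\W_1,\W_2\rangle_F=-\eta(\langle\W_1,\mathbf{G}\rangle_F+\langle\W_2,\mathbf{G}\rangle_F)+\eta^2\|\mathbf{G}\|_F^2$, calls the nonnegative $\eta^2$ term a systematic push, and stops at the latent weights.

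Summing those one-step increments telescopes exactly to your unrolled formula, and the unrolled form says more. The paper's per-step ``alignment'' terms already contain shared history:
$-\eta\langle\W_1^{(t)}+\W_2^{(t)},\mathbf{G}^{(t)}\rangle_F=-\eta\langle\W_1^{(0)}+\W_2^{(0)},\mathbf{G}^{(t)}\rangle_F+2\eta^2\langle\mathbf{S}_t,\mathbf{G}^{(t)}\rangle_F$.
So only the cumulative form cleanly separates the shared term $\eta^2\|\mathbf{S}_T\|_F^2$ from the initialization cross term. Since the per-step $\eta^2\|\mathbf{G}\|_F^2$ is dominated by $O(\eta)$ terms, accumulation is what gives ``persistent'' any content.

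You also normalize to a correlation, which matters because an inner-product increase alone does not raise the correlation when the norms also grow. And you try to carry the conclusion to $\B_i$ and the outputs, which the paper never does. An exact way to state your core fact: since $\W_1-\W_2$ is frozen, $\langle\W_1,\W_2\rangle_F=\tfrac14\bigl(\|\W_1+\W_2\|_F^2-\|\W_1-\W_2\|_F^2\bigr)$. The inner product therefore rises exactly when the latent sum grows in norm, and the correlation is pushed toward $+1$ once that norm is large relative to the frozen difference.

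Three steps need repair.

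\textbf{The arcsine transfer to the binary cores.} It requires the latent entries to be jointly Gaussian, which residual-style latents are not, and then the sign of the latent correlation does not carry over. Take i.i.d.\ standard Gaussian entries for $\W_1^{(0)}=\W_{\mathrm{FP}}$ and $\W_2^{(0)}=\R_1=\W_{\mathrm{FP}}-\alpha\sign(\W_{\mathrm{FP}})$ with $\alpha=\sqrt{2/\pi}$.
\begin{itemize}
\item The latent correlation is $\sqrt{1-2/\pi}\approx 0.60$, so $c_0>0$ here, not $c_0\le 0$.
\item The cores agree only where $|(\W_{\mathrm{FP}})_{jk}|>\alpha$, which gives $\mathbb{E}[(\B_1)_{jk}(\B_2)_{jk}]\approx -0.15$.
\item The arcsine law would predict $\approx 0.41$.
\end{itemize}
Use instead the exact entrywise identity $(\W_1)_{jk}(\W_2)_{jk}=\tfrac14\bigl[(\W_1+\W_2)_{jk}^2-(\W_1-\W_2)_{jk}^2\bigr]$. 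The cores agree at $(j,k)$ iff $|(\W_1+\W_2)_{jk}|>|(\W_1-\W_2)_{jk}|$. The right side is frozen, and the left side moves by $-2\eta(\mathbf{S}_T)_{jk}$. Agreement therefore spreads to exactly those entries where the drifting sum overtakes the frozen difference in magnitude, with no distributional assumption.

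\textbf{The growth claim.} Linear-in-$T$ growth of the shared term is an extra assumption, not a consequence of noisy gradients. Note that $-2\eta\mathbf{S}_T$ is exactly the displacement of $\W_1+\W_2$. So $\|\mathbf{S}_T\|_F^2=\Theta(T)$ means the latent sum diffuses with no restoring force. That is plausible when the STE leaves latent magnitudes unconstrained. It is false when sign flips reverse $\mathbf{G}$, in which case $\mathbf{S}_T$ stays bounded.

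The small-step or linearized regime is also where your argument fails, not where it is safe. There the $O(\eta)$ cross term dominates and has no definite sign, which is precisely the weakness of the paper's one-step formula. Cauchy--Schwarz gives dominance only once $\eta\|\mathbf{S}_T\|_F\gg\|\W_1^{(0)}+\W_2^{(0)}\|_F$, i.e.\ after the latents have moved farther than their initial size.

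\textbf{The per-path masked or scaled STE aside.} Drop it. With path-dependent, time-varying factors, the increments differ and $\W_1-\W_2$ is no longer invariant. Examples are a clipping mask that depends on $\W_i^{(t)}$, or evolving scales $\g_i\h_i^{\top}$. The cross-time terms of the two accumulated updates then need not have a sign. Your argument needs exactly the identical-increment hypothesis the proposition states.
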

\begin{proof}
Let the latent weights be $\W_1$ and $\W_2$. After a single update step with learning rate $\eta$ and shared gradient $\mathbf{G}$, the new weights are $\W_1'$ and $\W_2'$:
$$ \W_1' := \W_1 - \eta \mathbf{G} \quad \text{and} \quad \W_2' := \W_2 - \eta \mathbf{G} $$
The change in the Frobenius inner product between the weights, which reflects their correlation, is:
$$ \Delta_{\langle\cdot,\cdot\rangle} := \langle \W_1', \W_2' \rangle_F - \langle \W_1, \W_2 \rangle_F $$
Expanding this gives:
$$ \Delta_{\langle\cdot,\cdot\rangle} = -\eta \left( \langle \W_1, \mathbf{G} \rangle_F + \langle \W_2, \mathbf{G} \rangle_F \right) + \eta^2 \|\mathbf{G}\|_F^2 $$
While the linear terms depend on the alignment between the current weights and the gradient, the quadratic term $\eta^2 \|\mathbf{G}\|_F^2$ is \textbf{always non-negative}. This term acts as a systematic force, constantly pushing the two paths in the same direction defined by the global gradient $\mathbf{G}$. This dynamic, the underlying mechanism of \textbf{inter-path adaptation}, compels both paths to learn redundant, dominant features in order to minimize the global loss. This ultimately leads to a breakdown of the intended residual hierarchy and compromises the model's expressive capacity.
\end{proof}

\begin{proposition}[Structurally Enforced Error Correction in RaBiT]
\label{prop:coupled_derivation}
RaBiT's coupled training mechanism resolves the redundancy drift by fundamentally changing the optimization objective. Instead of independent updates, RaBiT's on-the-fly derivation structurally forces the second path ($\hat{\W}_2$) to align with the true residual of the first path ($\R_1 = \W_{\mathrm{FP}} - \hat{\W}_1$), thereby enforcing an error-correcting relationship.
\end{proposition}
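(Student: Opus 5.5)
The plan is to make the statement precise by comparing the inner-product drift of Proposition~\ref{prop:independent_qat_tendency} with the corresponding quantity under the coupled forward pass of \Cref{subsec:coupled_training}. The key observation is that in RaBiT the second path is not a free latent variable but a deterministic function of the shared weight and the first path: $\B_2 = \sign(\R_1)$ with $\R_1 = \W_{\mathrm{FP}} - \hat{\W}_1$. I would therefore split the argument into two parts. First, an \emph{alignment} statement: at every step, $\hat{\W}_2$ is positively aligned with $\R_1$. Second, an \emph{update} statement: a gradient step on $\W_{\mathrm{FP}}$ does not generate the $\eta^2\|\mathbf{G}\|_F^2$ redundancy term, because the change is absorbed into the residual that $\B_2$ tracks.

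For the alignment step, I would write $\hat{\W}_2 = \g_2 \odot \sign(\R_1) \odot \h_2$. I would assume the scales are nonnegative, which holds at the SVID initialization since SVID factorizes magnitudes, and which I would treat as maintained during training. Elementwise, $(\R_1)_{ij}(\hat{\W}_2)_{ij} = g_{2,i} h_{2,j} |(\R_1)_{ij}| \ge 0$. Hence
$$\langle \R_1, \hat{\W}_2\rangle_F = \sum_{ij} g_{2,i}h_{2,j}|(\R_1)_{ij}| \ge 0,$$
with strict inequality unless $\R_1 = 0$. Among all matrices with the same per-channel magnitudes $g_{2,i}h_{2,j}$, this choice of sign pattern maximizes the inner product with $\R_1$; this is the rearrangement observation that $\sign$ is the optimal binary core for a fixed rank-one magnitude. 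Plugging this into the local objective of \eqref{eq:residual_alignment}, $\|\R_1 - \hat{\W}_2\|_F^2 = \|\R_1\|_F^2 + \|\hat{\W}_2\|_F^2 - 2\langle \R_1,\hat{\W}_2\rangle_F$. Choosing the magnitudes, for example as the least-squares projection onto the rank-one scale structure, then gives $\|\R_1 - \hat{\W}_2\|_F < \|\R_1\|_F$. So the second path provably reduces the first path's error.

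For the update step, I would take one STE update $\W_{\mathrm{FP}}' = \W_{\mathrm{FP}} - \eta\mathbf{G}$ with $\mathbf{G}$ from \eqref{eq:RaBiT-grad-wfp}. The paths $\B_1' = \sign(\W_{\mathrm{FP}}')$ and $\B_2' = \sign(\W_{\mathrm{FP}}' - \hat{\W}_1')$ are recomputed from it. The contrast with Proposition~\ref{prop:independent_qat_tendency} is that the two paths no longer receive the same additive increment. The increment enters once, through $\W_{\mathrm{FP}}$, and $\B_2'$ is re-derived from the new residual, so the alignment inequality above holds again at the new iterate. There is thus no analogue of the always-nonnegative cross term $\eta^2\|\mathbf{G}\|_F^2$ in $\langle \hat{\W}_1', \hat{\W}_2'\rangle_F$. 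Instead, I would show that $\langle \hat{\W}_1, \hat{\W}_2\rangle_F = \langle \hat{\W}_1, \W_{\mathrm{FP}}\rangle_F - \|\hat{\W}_1\|_F^2 - \langle \hat{\W}_1, \R_1 - \hat{\W}_2\rangle_F$. When $\hat{\W}_1$ is a near least-squares fit, $\langle \hat{\W}_1, \R_1\rangle_F \approx 0$, and this quantity becomes nonpositive up to the small residual term. That matches the negative path correlation reported in \Cref{tab:detailed_mse_decomposition}.

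The main obstacle is rigor around the learnable scales and the discontinuity of $\sign$. The scales are trained rather than recomputed, so nonnegativity and near-optimality are not automatic. Also, a small change in $\W_{\mathrm{FP}}$ can flip signs in $\B_1$, which changes $\R_1$ discontinuously. I would first try to state the result as a per-step invariant that holds for \emph{any} nonnegative scales, namely $\langle \R_1, \hat{\W}_2\rangle_F \ge 0$ at every iterate. This invariant needs neither continuity nor optimality. I would then add the stronger error-reduction and anti-correlation claims under the explicit assumption that the scales stay close to their SVID projections, which \Cref{subsec:stable_init} motivates.
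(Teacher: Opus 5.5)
Your alignment argument is correct, and it is sharper than the paper's. The paper's analysis asserts that the derivation turns the second path's objective into $\min\|\R_1-\hat{\W}_2\|_F^2$ and concludes $\hat{\W}_2\approx\R_1$. It then obtains anti-correlation from an informal ``overshoot'' picture of $\hat{\W}_1$. That assertion overstates what the mechanism fixes: only the core $\B_2=\sign(\R_1)$ is tied to the residual, while $\g_2,\h_2$ are trained against the task loss. Your identity $\langle\R_1,\hat{\W}_2\rangle_F=\sum_{ij}g_{2,i}h_{2,j}|(\R_1)_{ij}|$ isolates exactly what is structurally enforced. It holds at every iterate regardless of sign flips in $\B_1$, and it needs only $g_{2,i}h_{2,j}\ge 0$. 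RaBiT does not enforce that condition, so keep it as an explicit hypothesis. The strict reduction $\|\R_1-\hat{\W}_2\|_F<\|\R_1\|_F$ then correctly needs a further assumption on the magnitudes, which the paper takes for granted. Your contrast with the $\eta^2\|\mathbf{G}\|_F^2$ term of Proposition~1 has no counterpart in the paper. It is a cleaner structural replacement for the paper's ``both paths align with $\mathbf{G}$'' heuristic, though it only shows that one mechanism for redundancy is absent.

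The anti-correlation step, however, does not deliver a sign. Your identity is the weight-space version of the decomposition in Corollary~1. Under your hypothesis that $\hat{\W}_1$ is a near least-squares fit, its first two terms combine to $\langle\hat{\W}_1,\R_1\rangle_F\approx 0$. What remains, $\langle\hat{\W}_1,\hat{\W}_2\rangle_F\approx-\langle\hat{\W}_1,\R_1-\hat{\W}_2\rangle_F$, is the \emph{entire} quantity, not a small correction to a nonpositive one. Nothing you have assumed fixes its sign. For example, take $\hat{\W}_1=\alpha\sign(\W_{\mathrm{FP}})$ with $\alpha$ the mean magnitude, and $\hat{\W}_2=\beta\sign(\R_1)$ with any $\beta>0$. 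Then the quantity equals $\alpha\beta\,(\#\{|(\W_{\mathrm{FP}})_{ij}|>\alpha\}-\#\{|(\W_{\mathrm{FP}})_{ij}|<\alpha\})$, which is negative only under a skewness condition (more than half the magnitudes below their mean).

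The paper gets its sign from a hypothesis incompatible with yours: overshoot, written in Corollary~1 as $\|\y_1\|^2>\y_1^\top\y_t$. In weight space this is $\langle\hat{\W}_1,\R_1\rangle_F<0$, which fails exactly at a least-squares fit of $\hat{\W}_1$ alone. Now suppose instead that $\g_1$ is stationary for the joint error $\|\W_{\mathrm{FP}}-\hat{\W}_1-\hat{\W}_2\|_F^2$, with $\B_2$ locally constant. Multiplying the stationarity conditions by $g_{1,i}$ and summing gives $\langle\hat{\W}_1,\R_1-\hat{\W}_2\rangle_F=0$, so $\langle\hat{\W}_1,\hat{\W}_2\rangle_F=\langle\hat{\W}_1,\R_1\rangle_F$. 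The path correlation then has exactly the sign of the overshoot term. To claim anti-correlation you need an explicit sign-determining hypothesis, either overshoot from joint tuning or a distributional skew condition. Otherwise, stop at the alignment invariant, which is all the proposition itself asserts.
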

\begin{analysis}
The optimization objectives of the two paths are implicitly different in RaBiT versus the naïve approach.

\begin{itemize}[itemsep=1.5pt, topsep=0pt, leftmargin=*]
    \item \textbf{Standard QAT Objective}: Both paths are driven by the same structurally-agnostic global gradient $\mathbf{G}$. Their implicit goal is to align with $\mathbf{G}$ to reduce the global loss. Since both $\hat{\W}_1$ and $\hat{\W}_2$ are incentivized to align with the same vector $\mathbf{G}$, they inevitably learn to align with each other, leading to redundancy as shown in Proposition 1.
    $$ \hat{\W}_1 \propto \mathbf{G} \quad \text{and} \quad \hat{\W}_2 \propto \mathbf{G} \implies \langle \hat{\W}_1, \hat{\W}_2 \rangle_F > 0 $$
    
    \item \textbf{RaBiT's Enforced Objective}: RaBiT maintains a single shared blueprint, $\W_{\mathrm{FP}}$. The on-the-fly derivation process, $\R_1 := \W_{\mathrm{FP}} - \hat{\W}_1$ followed by the binarization of $\R_1$ to create $\hat{\W}_2$, explicitly defines the optimization target for the second path. The goal for $\hat{\W}_2$ is no longer to align with the global gradient $\g$, but to be the best possible low-rank approximation of the current residual $\R_1$.
    $$ \text{Objective for } \hat{\W}_2: \quad \min \|\R_1 - \hat{\W}_2\|_F^2 \implies \hat{\W}_2 \approx \R_1 $$
    This structural constraint forces a high \textbf{Residual Alignment}. In the context of extreme low-bit quantization, the first approximation $\hat{\W}_1$ often "overshoots" the target $\W_{\mathrm{FP}}$ in certain directions. To correct this, the residual $\R_1 = \W_{\mathrm{FP}} - \hat{\W}_1$ will point in the opposite direction of the overshoot. By aligning with $\R_1$, $\hat{\W}_2$ naturally becomes \textbf{anti-correlated} with $\hat{\W}_1$, implementing an efficient \textbf{active cancellation} mechanism rather than degenerating into redundancy.
\end{itemize}
\end{analysis}

\begin{proposition}[Superior Optimization Dynamics of Coupled vs. Iterative Training]
\label{prop:iterative_suboptimality}
Iterative training (\eg freezing one path while training the other) avoids adaptation but at the cost of optimization efficiency. In contrast, RaBiT resolves adaptation while permitting full parameter co-adaptation, resulting in a superior optimization trajectory.
\end{proposition}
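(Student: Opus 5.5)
The plan is to make the informal claim precise as a comparison of two block-coordinate schemes on the same objective. For the iterative (path-freezing) scheme, only one block of parameters moves per phase. For RaBiT, the full parameter set $\theta = (\W_{\mathrm{FP}}, \{\g_i,\h_i\})$ moves jointly at every step. I would work with a local quadratic model of the loss around the current iterate, $\mathcal{L}(\theta+\delta) \approx \mathcal{L}(\theta) + \langle \nabla \mathcal{L}, \delta\rangle + \tfrac12 \delta^\top \mathbf{H}\delta$, and compare the one-step decrease achievable by each scheme under the same learning rate $\eta$. The binary cores are treated as piecewise constant, and the effective-weight STE of \Cref{eq:RaBiT-grad-wfp} is used for the shared weight.

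\textbf{Step 1: the iterative scheme is a restricted descent.} Freezing path $j$ means the update $\delta$ is constrained to the subspace $\mathcal{S}_i$ of path-$i$ parameters. Its first-order decrease is therefore $\eta\|P_{\mathcal{S}_i}\nabla\mathcal{L}\|^2$. For RaBiT the decrease is $\eta\|\nabla\mathcal{L}\|^2 = \eta\sum_i \|P_{\mathcal{S}_i}\nabla\mathcal{L}\|^2$ on the scale blocks, plus the additional $\W_{\mathrm{FP}}$ contribution. Since $\|P_{\mathcal{S}_i} v\|\le \|v\|$, the per-step decrease of the iterative scheme is bounded by RaBiT's. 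Equality holds only when the gradient already lies in the active block.

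\textbf{Step 2: the fixed points of the iterative scheme can be suboptimal.} The frozen block is chosen in an earlier phase and never re-optimized against the later block. Its stationary points are therefore only coordinate-wise (Nash-type) stationary. I would give a simple two-variable quadratic with a strongly coupled off-diagonal Hessian $\mathbf{H}_{12}$ to show that alternating minimization converges slowly, at a rate governed by $\|\mathbf{H}_{11}^{-1}\mathbf{H}_{12}\mathbf{H}_{22}^{-1}\mathbf{H}_{21}\|$. With a frozen primary core, as in MBOK, it may stop at a point where $\nabla_{\text{frozen}}\mathcal{L}\neq 0$. Joint updates have no such restriction.

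\textbf{Step 3: RaBiT does not inherit the redundancy of Proposition~\ref{prop:independent_qat_tendency}.} There is only one latent $\W_{\mathrm{FP}}$, so the cross term $\eta^2\|\mathbf{G}\|_F^2$ that drove positive correlation has no pair of independent latents on which to act. After each update, $\B_2=\sign(\R_1)$ is recomputed, so Proposition~\ref{prop:coupled_derivation} still places $\hat{\W}_2$ along the residual. Combining this with Step 1 gives the claimed conclusion: RaBiT has full-gradient descent and keeps the residual hierarchy, while iterative training avoids adaptation only by projecting the gradient.

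\textbf{Main obstacle.} The hard part is Step 1 together with the sign nonlinearity. A change in $\W_{\mathrm{FP}}$ can flip entries of $\B_1$ and hence of $\B_2$, so the true loss is not smooth in $\W_{\mathrm{FP}}$. The STE gradient is also only a surrogate, so "full gradient descent" is not literally a descent statement. My first approach would be to restrict to steps small enough that no sign flips occur, where the cores are locally constant and the comparison is exact on the scales. I would then state the $\W_{\mathrm{FP}}$ part only under the STE approximation. Accordingly, the result should be presented as an analysis of local dynamics rather than a global convergence theorem.
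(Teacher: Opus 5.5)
Your Steps 1 and 3 reproduce the paper's argument. The paper compares the frozen-path direction $(\mathbf{0},-\mathbf{G})$ with the joint direction $(-\mathbf{G},-\mathbf{G})$ in the two-latent space and obtains $\cos\theta=1/\sqrt{2}$, which is your ratio $\|P_{\mathcal{S}_i}\nabla\mathcal{L}\|/\|\nabla\mathcal{L}\|$ for two equal blocks. It then invokes Proposition~\ref{prop:coupled_derivation} for RaBiT. However, two parts of your plan do not deliver what you claim.

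First, $\|Pv\|\le\|v\|$ needs one vector $v$ in one space, but the frozen scheme (latents $\W_1,\W_2$) and RaBiT (latent $\W_{\mathrm{FP}}$) are parametrized differently. Once you match them through the STE, the active latent $\W_i$ and $\W_{\mathrm{FP}}$ both receive $\mathbf{G}$ and give the same surrogate decrease $\eta\|\mathbf{G}\|_F^2$. So the $\W_{\mathrm{FP}}$ term is not ``additional''. The only strict gain is the frozen path's scale gradients, and only if those scales are frozen too. Consider a variant that freezes just the core and keeps all scales trainable, such as the paper's MBOK ablation (described as freezing only the primary binary core). For it, Step 1 gives equality, both under the STE and in your no-flip fallback, where latent updates do not change the loss at all.

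What really separates RaBiT from path freezing is that moving $\W_{\mathrm{FP}}$ can flip entries of $\B_1$ and thereby re-derive $\B_2$, so more core pairs are reachable. This discrete, non-local effect is invisible to your first-order bound and to the paper's cosine computation alike. Indeed, the paper's reference step $(-\mathbf{G},-\mathbf{G})$ moves the effective weight by $-2\eta\mathbf{G}$, whereas RaBiT's STE step moves it by $-\eta\mathbf{G}$, exactly like the frozen scheme. You should name this effect explicitly rather than credit Step 1.

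Second, Step 2 would fail as planned. On a coupled quadratic, alternating block minimization is not slower than joint gradient descent. For $\mathbf{H}=\begin{pmatrix}1&c\\c&1\end{pmatrix}$, gradient descent with its best fixed step contracts the error by $|c|$ per full-gradient step. A Gauss--Seidel sweep of the same gradient cost contracts it by $c^2$, so your example would favour the iterative scheme.

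Your rate quantity is also misleading precisely in the setting of Proposition~\ref{prop:independent_qat_tendency}. If the surrogate loss depends only on $\W_1+\W_2$, then $\mathbf{H}_{11}^{-1}\mathbf{H}_{12}\mathbf{H}_{22}^{-1}\mathbf{H}_{21}=\mathbf{I}$, yet a single block minimization already reaches a minimizer. The unit eigenvalue reflects the flat redundancy direction $(\mathbf{V},-\mathbf{V})$, not slow loss decrease.

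Moreover, for a smooth loss, vanishing block gradients mean a vanishing gradient. So cycling through blocks creates no separate class of suboptimal ``coordinate-wise stationary'' points. The only defensible content of Step 2 is the permanently frozen case, where $\nabla_{\text{frozen}}\mathcal{L}\neq 0$ can persist at the limit. That is a restricted-solution-space argument, not a zig-zag or rate argument, and it should replace the convergence-rate comparison.
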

\begin{proof}
Following Proposition 1, the problem of Standard QAT is the simultaneous update of both paths in the same direction. An alternative solution is to update them iteratively, which prevents this simultaneous push and thus avoids adaptation. However, this introduces a new problem of inefficiency.

The optimal direction to reduce the loss $\mathcal{L}$ is the steepest descent direction in the joint parameter space of $(\W_1, \W_2)$, which is $\mathbf{d}^* = (-\mathbf{G}, -\mathbf{G})$. When training iteratively, one path is frozen, so the update is restricted to an axis-aligned direction, \eg $\mathbf{d}_{\text{iter}} = (\mathbf{0}, -\mathbf{G})$. The cosine similarity between the iterative update and the optimal update direction is:
\begin{align*}
    \cos(\theta) &= \frac{\langle \mathbf{d}_{\text{iter}}, \mathbf{d}^* \rangle_F}{\|\mathbf{d}_{\text{iter}}\|_F \|\mathbf{d}^*\|_F} 
    = \frac{\langle (\mathbf{0}, -\mathbf{G}), (-\mathbf{G}, -\mathbf{G}) \rangle_F}{\|(\mathbf{0}, -\mathbf{G})\|_F \|(-\mathbf{G}, -\mathbf{G})\|_F} \\
    &= \frac{\|\mathbf{G}\|_F^2}{\|\mathbf{G}\|_F \cdot \sqrt{\|\mathbf{G}\|_F^2 + \|\mathbf{G}\|_F^2}} = \frac{1}{\sqrt{2}}
\end{align*}
This fixed $45^\circ$ misalignment forces the optimization to follow an inefficient zig-zag trajectory. While it solves adaptation, it sacrifices optimization efficiency.

RaBiT, through its coupled derivation described in Proposition 2, resolves this trade-off. By updating a single shared weight $\W_{\mathrm{FP}}$ with the full gradient $\mathbf{G}$, it allows both paths to co-adapt simultaneously in a coordinated manner that is not restricted to an inefficient path. Thus, RaBiT resolves adaptation without compromising optimization efficiency, leading to superior dynamics.
\end{proof}

\paragraph{Corollary 1 (to Proposition 2). Negative Correlation Induction.}
\label{col:neg_corr_induction}
\textit{RaBiT's coupled training mechanism, by forcing the second path ($\hat{\W}_2$) to approximate the residual of the first path ($\R_1$), inherently promotes a negative correlation between their respective outputs ($\y_1, \y_2$).}

\begin{analysis}
From Proposition 2, we established that RaBiT trains the second path to approximate the residual of the first:
$$
    \hat{\W}_2 \approx \R_1 = \W_{\mathrm{FP}} - \hat{\W}_1
$$
Let us consider the outputs for a given input $\x$. The outputs of the full-precision teacher, the first path, and the second path are $\y_t = \W_{\mathrm{FP}}\x$, $\y_1 = \hat{\W}_1\x$, and $\y_2 = \hat{\W}_2\x$, respectively. Based on the weight approximation, the output of the second path is:
$$
    \y_2 \approx \R_1\x = (\W_{\mathrm{FP}} - \hat{\W}_1)\x = \y_t - \y_1
$$
Now, we can analyze the covariance between the outputs $\y_1$ and $\y_2$. Assuming the outputs are centered for simplicity, the covariance is proportional to the expected value of their dot product, $\mathbb{E}[\y_1^\top \y_2]$.
$$
    \mathbb{E}[\y_1^\top \y_2] \approx \mathbb{E}[\y_1^\top (\y_t - \y_1)] = \mathbb{E}[\y_1^\top \y_t] - \mathbb{E}[\y_1^\top \y_1] = \mathbb{E}[\y_1^\top \y_t] - \mathbb{E}[\|\y_1\|^2]
$$
Let us analyze the two terms:
\begin{enumerate}[itemsep=1pt, topsep=2pt]
    \item $\mathbb{E}[\y_1^\top \y_t]$: The first path $\hat{\W}_1$ is the primary, albeit coarse, approximation of $\W_{\mathrm{FP}}$.
    Its purpose is to capture the main features of the teacher, so $\y_1$ and $\y_t$ are expected to be \textbf{positively correlated}.
    However, in the extreme 1-bit regime, the approximation is directionally coarse: a non-negligible portion of $\y_1$ lies in directions \emph{not} well-aligned with $\y_t$.
    This misaligned component limits how large the alignment term $\y_1^\top \y_t$ can become, making the positive correlation typically \emph{not} strong.

    \item $\mathbb{E}[\|\y_1\|^2]$: This is the expected squared norm of the first path's output. Binarization is an aggressive quantization that often leads to an ``overshoot'' in magnitude. A single binary path must represent a wide range of continuous values, so its effective scaling factor often results in an output magnitude $\|\y_1\|$ that exceeds the projection of $\y_1$ onto $\y_t$. Consequently, it is often the case that $\|\y_1\|^2 > \y_1^\top \y_t$, making $\mathbb{E}[\|\y_1\|^2]$ a larger positive term than $\mathbb{E}[\y_1^\top \y_t]$.
\end{enumerate}
Combining these points, the covariance is approximately the difference between a positive term and a larger positive term:
$$
    \operatorname{Cov}(\y_1, \y_2) \approx \underbrace{\mathbb{E}[\y_1^\top \y_t]}_{\text{Positive Alignment}} - \underbrace{\mathbb{E}[\|\y_1\|^2]}_{\text{Larger Magnitude Term}} < 0
$$
Thus, RaBiT's mechanism of forcing the second path to correct the error of the first path structurally drives the covariance, and therefore the correlation $\operatorname{Corr}(\y_1, \y_2)$, toward negative, consistent with our empirical observations.
\end{analysis}

\section{Extended Analysis: Inter-Path Adaptation under KL Divergence}
\label{appendix:kl_analysis}

While Section 3 and Corollary 1 motivate RaBiT using the Mean Squared Error (MSE) decomposition, modern LLM training often relies on the Kullback-Leibler (KL) divergence loss. In this section, we rigorously demonstrate that the principle of ``inter-path adaptation'' and RaBiT's solution remain valid under the KL divergence objective. We leverage a local quadratic approximation and the findings from \citep{kim2021comparing} regarding the decomposition of KL loss.

\begin{proposition}[Optimality of Residual Coupling under KL Divergence]
RaBiT's residual coupling mechanism structurally eliminates the optimization bias inherent in the KL divergence loss by enforcing a negative Hessian-weighted path correlation, a property that Standard QAT fails to satisfy.
\end{proposition}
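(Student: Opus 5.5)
The plan is to reduce the KL objective to a Hessian-weighted quadratic form around the teacher and then repeat the MSE argument of \Cref{eq:mse_decomp,eq:residual_alignment} in the metric defined by that Hessian.

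\textbf{Step 1: local quadratic model.} Let $\mathbf{z}_t$ be the teacher logits and $\mathbf{z}_s = \mathbf{z}_t + \boldsymbol{\delta}$ the student logits, where $\boldsymbol{\delta}$ is generated by the perturbation $\hat{\W}^{(2)} - \W_{\mathrm{FP}}$ in a given layer. Expand $\mathrm{KL}(p_t \,\|\, p_s)$ to second order in $\boldsymbol{\delta}$. The zeroth- and first-order terms vanish, because KL is minimized at $\boldsymbol{\delta}=0$ and the softmax gradient of KL is $p_s - p_t$. This leaves
\[
\mathrm{KL} \approx \tfrac12 \boldsymbol{\delta}^\top \mathbf{F}\boldsymbol{\delta}, \qquad \mathbf{F} = \operatorname{diag}(p_t) - p_t p_t^\top,
\]
which is the Fisher/Hessian of the softmax, following the decomposition of \citet{kim2021comparing}. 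Linearizing the network from the layer to the logits with a Jacobian $\mathbf{J}$, the layer-level loss becomes $\tfrac12\mathbb{E}\|\mathbf{J}(\y_t - \y_1 - \y_2)\|_{\mathbf{F}}^2$. This is a positive semidefinite quadratic form $\langle\cdot,\cdot\rangle_{\mathbf{H}}$ with $\mathbf{H} = \mathbf{J}^\top \mathbf{F}\mathbf{J}$.

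\textbf{Step 2: decomposition in the $\mathbf{H}$-inner product.} Expanding $\|\y_t - \y_1 - \y_2\|_{\mathbf{H}}^2$ term by term, exactly as in \Cref{eq:mse_decomp}, gives correlation-independent terms plus an interaction term $2\langle \y_1,\y_2\rangle_{\mathbf{H}} = 2\sigma^{\mathbf{H}}_1\sigma^{\mathbf{H}}_2 \operatorname{Corr}_{\mathbf{H}}(\y_1,\y_2)$. Rewriting around the residual gives $\|\R_1\x - \y_2\|_{\mathbf{H}}^2$, which is the analogue of \Cref{eq:residual_alignment}. The \emph{optimization bias} in the statement can then be identified with a nonnegative interaction term. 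For Standard QAT I plan to transport \Cref{prop:independent_qat_tendency}: both latent weights receive the same gradient $\mathbf{G} = \mathbf{J}^\top\mathbf{F}\mathbf{J}(\cdots)\x^\top$. The $\eta^2$ term therefore adds $\eta^2\langle \mathbf{G}\x,\mathbf{G}\x\rangle_{\mathbf{H}} \ge 0$ to the $\mathbf{H}$-correlation, by positive semidefiniteness of $\mathbf{H}$. This pushes $\operatorname{Corr}_{\mathbf{H}}(\y_1,\y_2)$ upward, so Standard QAT does not enforce negativity.

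\textbf{Step 3: RaBiT under this metric.} By \Cref{prop:coupled_derivation}, $\hat{\W}_2 \approx \R_1$, so $\y_2 \approx \y_t - \y_1$. Then
\[
\langle \y_1,\y_2\rangle_{\mathbf{H}} \approx \langle \y_1,\y_t\rangle_{\mathbf{H}} - \|\y_1\|_{\mathbf{H}}^2 .
\]
I would rerun the overshoot argument of Corollary~1 in this metric, using the fact that the learnable scales are optimized against the same loss. At a stationary point of the loss in the scale $\g_1$, the Hessian-weighted projection satisfies $\langle \y_1, \y_t - \y_1 - \y_2\rangle_{\mathbf{H}} = 0$. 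Combined with the residual structure, this gives $\langle\y_1,\y_2\rangle_{\mathbf{H}} = \langle \y_1,\y_t\rangle_{\mathbf{H}} - \|\y_1\|^2_{\mathbf{H}}$, and this quantity is negative whenever the path-1 scale alone overshoots, i.e.\ whenever path 2 contributes positively along $\y_1$.

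\textbf{Main obstacle.} The hard part is Step 3's sign claim. Binarization overshoot was only heuristic in the MSE setting, and in the $\mathbf{H}$-metric it must hold after reweighting by $\mathbf{F}$, which can suppress exactly the overshooting directions. I would first try to make it rigorous through the stationarity condition on scales just described. Failing that, I would state it under an explicit assumption: the rank-1 SVID scale computed in the Euclidean metric overshoots in the $\mathbf{H}$-metric. Such an assumption could be supported by the preconditioning of \Cref{eq:init-calib}, which approximates $\mathbf{H}$ by channel importances.
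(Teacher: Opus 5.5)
\textbf{The gap is in Step 3, and the stationarity route cannot close it.} Your identity $\langle\y_1,\y_2\rangle_{\mathbf{H}}=\langle\y_1,\y_t\rangle_{\mathbf{H}}-\|\y_1\|_{\mathbf{H}}^2$ follows from $\langle\y_1,\y_t-\y_1-\y_2\rangle_{\mathbf{H}}=0$ alone.

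\begin{itemize}
\item It never uses $\hat{\W}_2\approx\R_1$. It holds for any scheme whose $\g_1$ is trained to stationarity with the $\B_i$ held constant, Standard QAT included, so it cannot separate RaBiT from Standard QAT.
\item It makes your sign argument circular. At a $\g_1$-stationary point, ``path 1 overshoots in the $\mathbf{H}$-metric,'' i.e.\ $\langle\y_1,\y_t-\y_1\rangle_{\mathbf{H}}<0$, is literally the same statement as $\langle\y_1,\y_2\rangle_{\mathbf{H}}<0$.
\item Overshoot is not something optimized scales produce on their own. A single path with a stationary scale satisfies $\langle\y_1,\y_t-\y_1\rangle_{\mathbf{H}}=0$.
\item Your fallback assumption concerns the Euclidean SVID scale at initialization. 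Once the scales are trained, you are back at the identity.
\end{itemize}

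The Standard QAT half of Step 2 is also weak. The $\eta^2\langle\mathbf{G}\x,\mathbf{G}\x\rangle_{\mathbf{H}}$ drift is dominated by the first-order terms for small $\eta$. It also tracks latent weights rather than the binarized outputs $\y_i$.

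\textbf{The missing idea is the paper's choice of interaction term.} The paper expands
$\|\y_1+\y_2-\y_t\|_{\mathbf{H}}^2=\|\y_1-\y_t\|_{\mathbf{H}}^2+\|\y_2\|_{\mathbf{H}}^2+2\langle\y_1-\y_t,\y_2\rangle_{\mathbf{H}}$
and defines $\text{PathCorr}_{\mathbf{H}}$ as the $\mathbf{H}$-cosine between the first path's \emph{error} $\y_1-\y_t$ (the paper's $\mathbf{h}_1-\y^\ell_t$) and $\y_2$. This is the $\mathbf{H}$-analogue of \Cref{eq:residual_alignment}, which you wrote down but did not use, rather than of the path-to-path term in \Cref{eq:mse_decomp}. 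With that definition the RaBiT side needs no overshoot hypothesis: $\y_2\approx\y_t-\y_1$ makes $\y_2$ anti-parallel to the error, so $\text{PathCorr}_{\mathbf{H}}\to-1$.

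Likewise, the ``optimization bias'' is not your interaction term under a new name. It is the logit-sum bias term of \citet{kim2021comparing} that separates KL from MSE. In your Step 1 it corresponds to the direction $\mathbf{1}$, which $\mathbf{F}$ annihilates. The paper removes it by the same relation, since $\y_s\approx\y_t$ gives $(\sum\y_s-\sum\y_t)^2\approx0$.

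Your Step 1 is a cleaner version of the paper's quadratic model. The first-order term vanishes exactly at the teacher logits rather than by assuming local optimality of the teacher, and $\mathbf{H}=\mathbf{J}^\top\mathbf{F}\mathbf{J}$ is explicit. Keep it, and replace Steps 2--3 with the error-to-path decomposition and this bias argument.
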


\begin{analysis}
We analyze the optimization dynamics at a specific linear layer $\ell$ where RaBiT is applied. Let $\y^\ell_t = \W_{\mathrm{FP}} \x^\ell$ be the output feature of the teacher model, and $\y^\ell_s$ be that of the student. Since the operations within the layer are linear, the student's output is the sum of its binary paths: $\y^\ell_s = \mathbf{h}_1 + \mathbf{h}_2$.

\paragraph{Local Quadratic Approximation.}
The global KL divergence loss $\mathcal{L}_{KL}$ can be approximated locally around the teacher's output $\y^\ell_t$ using a second-order Taylor expansion:
\begin{equation}
    \mathcal{L}_{KL}(\y^\ell_s) \approx \mathcal{L}_{KL}(\y^\ell_t) + \nabla \mathcal{L}(\y^\ell_t)^\top \Delta \y + \frac{1}{2} \Delta \y^\top \mathbf{H}^\ell \Delta \y
\end{equation}
Assuming the teacher is optimal locally ($\nabla \mathcal{L} \approx 0$), the optimization objective reduces to minimizing a \textbf{Hessian-weighted MSE}:
\begin{equation}
    \mathcal{J}_{local} \approx \frac{1}{2} \| \y^\ell_s - \y^\ell_t \|_{\mathbf{H}^\ell}^2 = \frac{1}{2} (\mathbf{h}_1 + \mathbf{h}_2 - \y^\ell_t)^\top \mathbf{H}^\ell (\mathbf{h}_1 + \mathbf{h}_2 - \y^\ell_t)
\end{equation}
where $\mathbf{H}^\ell$ is the Hessian matrix representing the local curvature.

\paragraph{Decomposition with Hessian-weighted Path Correlation.}
Similar to \Cref{eq:mse_decomp} in the main text, we decompose this local objective. By treating $(\mathbf{h}_1 - \y^\ell_t)$ as the residual error of the first path, we expand the quadratic term:
\begin{equation}
    \mathcal{J}_{local} \propto \underbrace{\| \mathbf{h}_1 - \y^\ell_t \|_{\mathbf{H}^\ell}^2}_{\text{Base Error}} + \underbrace{\| \mathbf{h}_2 \|_{\mathbf{H}^\ell}^2}_{\text{Path 2 Amp.}} + \underbrace{2 (\mathbf{h}_1 - \y^\ell_t)^\top \mathbf{H}^\ell \mathbf{h}_2}_{\text{Interaction Term}}
\end{equation}
We define the \textit{Hessian-weighted Path Correlation}, denoted as $\text{PathCorr}_{\mathbf{H}}$, based on the generalized cosine similarity in the inner product space defined by $\mathbf{H}^\ell$:
\begin{equation}
    \text{Interaction Term} = 2 \cdot \| \mathbf{h}_1 - \y^\ell_t \|_{\mathbf{H}^\ell} \| \mathbf{h}_2 \|_{\mathbf{H}^\ell} \cdot \textbf{PathCorr}_{\mathbf{H}}(\mathbf{h}_1 - \y^\ell_t, \mathbf{h}_2)
\end{equation}
This formulation reveals that minimizing the local loss requires maximizing the negative magnitude of $\text{PathCorr}_{\mathbf{H}}$.

\paragraph{Bridging Interaction to Bias Cancellation.}
To understand the implication of this interaction term in the context of KL divergence, we refer to \citep{kim2021comparing}, which proved that minimizing $\mathcal{L}_{KL}$ is equivalent to minimizing MSE plus a negative \textbf{Bias Term ($\delta$)}. This bias term acts as a destabilizing force that pushes the student's logit sum to diverge from the teacher's. Crucially, effectively maximizing the negative interaction term (\ie enforcing error correction) is the key to neutralizing this bias.

\begin{itemize}[leftmargin=*]
    \item \textbf{Failure of Standard QAT:} In Standard QAT, $\mathbf{h}_1$ and $\mathbf{h}_2$ are updated independently using the same gradient signal. This leads to $\mathbf{h}_2$ aligning with $\y^\ell_t$ rather than the residual $(\y^\ell_t - \mathbf{h}_1)$. Consequently, $\text{PathCorr}_{\mathbf{H}}$ becomes positive (redundancy) or near zero. This failure to exploit the interaction bonus leaves the destabilizing Bias Term ($\delta$) unchecked, causing the optimization drift described in \citep{kim2021comparing}.

    \item \textbf{Success of RaBiT:} RaBiT enforces $\hat{\W}_2 = \operatorname{sign}(\W_{\mathrm{FP}} - \hat{\W}_1)$, structurally guaranteeing:
    \begin{equation}
        \mathbf{h}_2 \approx \y^\ell_t - \mathbf{h}_1 = -(\mathbf{h}_1 - \y^\ell_t)
    \end{equation}
    This forces the second path vector to be anti-parallel to the first path's error vector in the feature space. As a result:
    \begin{enumerate}
        \item \textbf{Maximized Interaction:} $\text{PathCorr}_{\mathbf{H}}$ approaches its theoretical minimum of $-1$ (perfect anti-correlation).
        \item \textbf{Bias Cancellation:} By strictly adhering to the residual $\mathbf{h}_2 \approx \y^\ell_t - \mathbf{h}_1$, the total student output $\y^\ell_s$ approximates $\y^\ell_t$ without the scale divergence issues. The Bias Term ($\delta$) is effectively cancelled out locally ($(\sum \y^\ell_s - \sum \y^\ell_t)^2 \approx 0$).
    \end{enumerate}
\end{itemize}
Therefore, RaBiT's residual coupling is the optimal strategy for minimizing $\mathcal{L}_{KL}$ as it structurally enforces the necessary negative correlation that Standard QAT fails to learn.
\end{analysis}

\section{Initialization Analysis: Functionality vs. Approximation}

\begin{table*}[ht]
  \centering
    \caption{\textbf{Initialization Analysis on Llama2-7B.} Trade-off between weight reconstruction error (Avg. MAE/MSE) and model functionality (Initial KL Divergence Loss), for the first \texttt{q\_proj} layer. I/O Channel Importance Scaling dramatically reduces KL Divergence Loss despite increasing MSE.}
     
  \label{tab:initialization_analysis}
  \small
  \setlength{\tabcolsep}{4.5pt}
  \renewcommand{\arraystretch}{1.12}
  \resizebox{0.60\textwidth}{!}{
    \begin{tabular}{@{}lccc@{}}
\toprule
\textbf{Initialization Method} & \textbf{Avg. MAE $\downarrow$} & \textbf{Avg. MSE $\downarrow$} & \textbf{KL Loss $\downarrow$} \\
\midrule
Greedy SVID                 & 0.359 & 0.150 & 17,152 \\
Iterative Residual SVID     & 0.370 & 0.122 & 13,760 \\
\hspace{1em}+ I/O Ch. Importance Scaling    & \textbf{0.632} & \textbf{0.302} & \textbf{2,672} \\
\bottomrule
\end{tabular}
  }
\end{table*}

\begin{figure}[ht!]
    \centering
    \captionsetup[subfigure]{justification=centering}

    % --- First Row ---
    \begin{subfigure}[b]{0.24\textwidth}
        \centering
        \includegraphics[width=\linewidth]{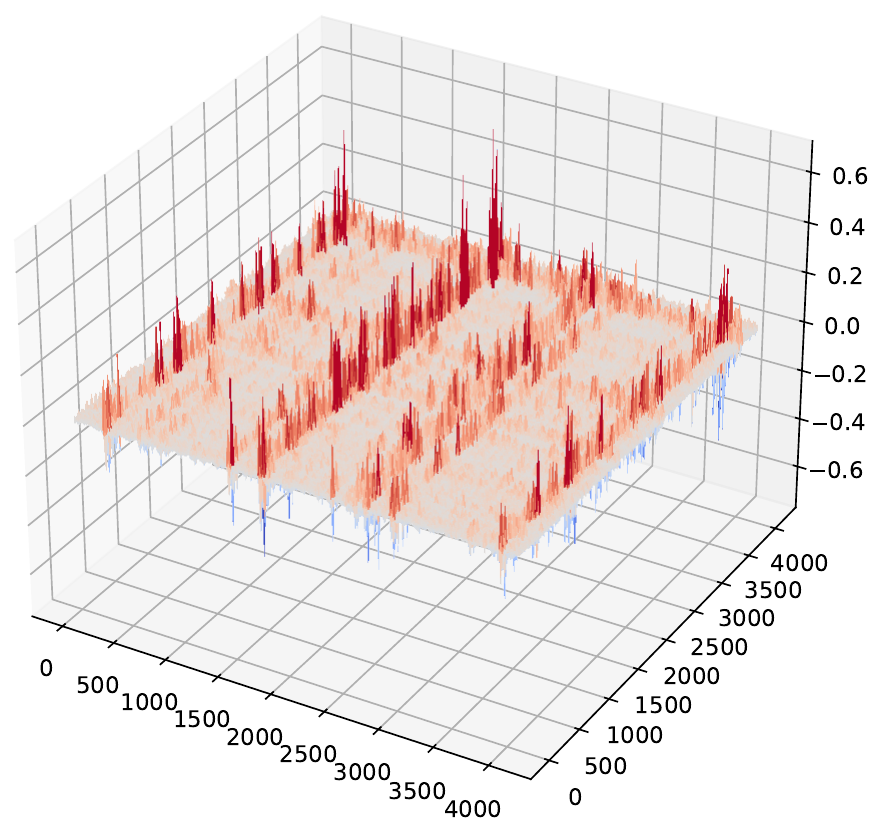}
        \caption{$\W_{\mathrm{FP}}$}
        \label{fig:init-wfp}
    \end{subfigure}%
    \hfill
    \begin{subfigure}[b]{0.24\textwidth}
        \centering
        \includegraphics[width=\linewidth]{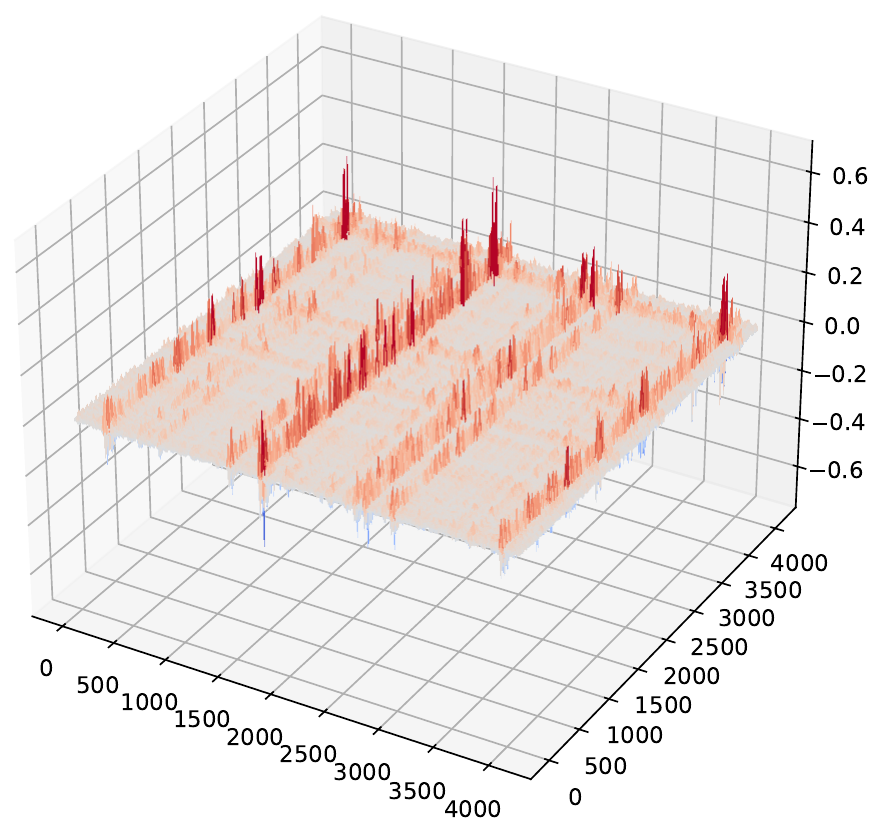}
        \caption{Greedy SVID Init.}
        \label{fig:init-greedy}
    \end{subfigure}%
    \hfill
    \begin{subfigure}[b]{0.24\textwidth}
        \centering
        \includegraphics[width=\linewidth]{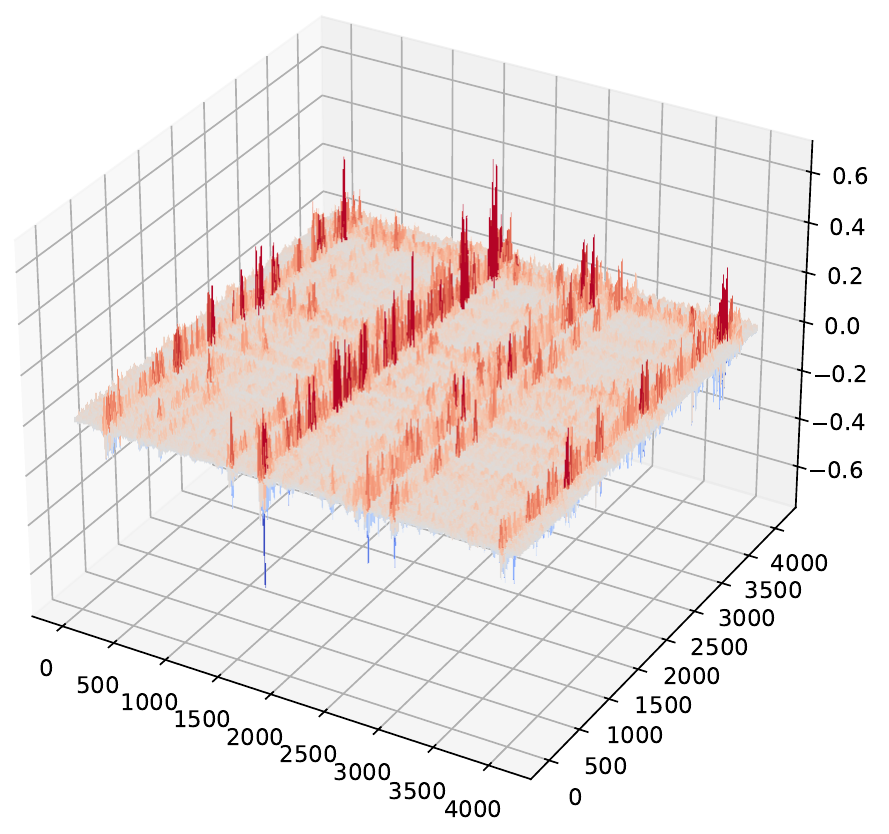}
        \caption{Iterative SVID Init.}
        \label{fig:init-iterative}
    \end{subfigure}%
    \hfill
    \begin{subfigure}[b]{0.24\textwidth}
        \centering
        \includegraphics[width=\linewidth]{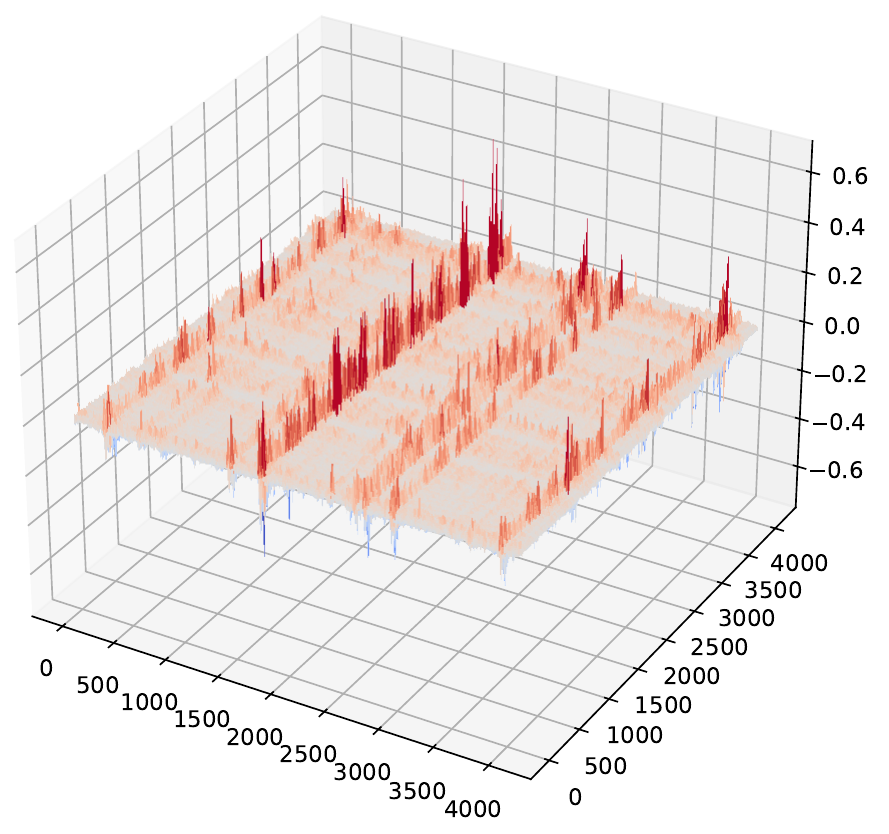}
        \caption{Iter + I/O Scaling}
        \label{fig:init-io}
    \end{subfigure}

    \vspace{3mm} 

    % --- Second Row ---
    \begin{subfigure}[b]{0.24\textwidth}
        \phantom{\includegraphics[width=\linewidth]{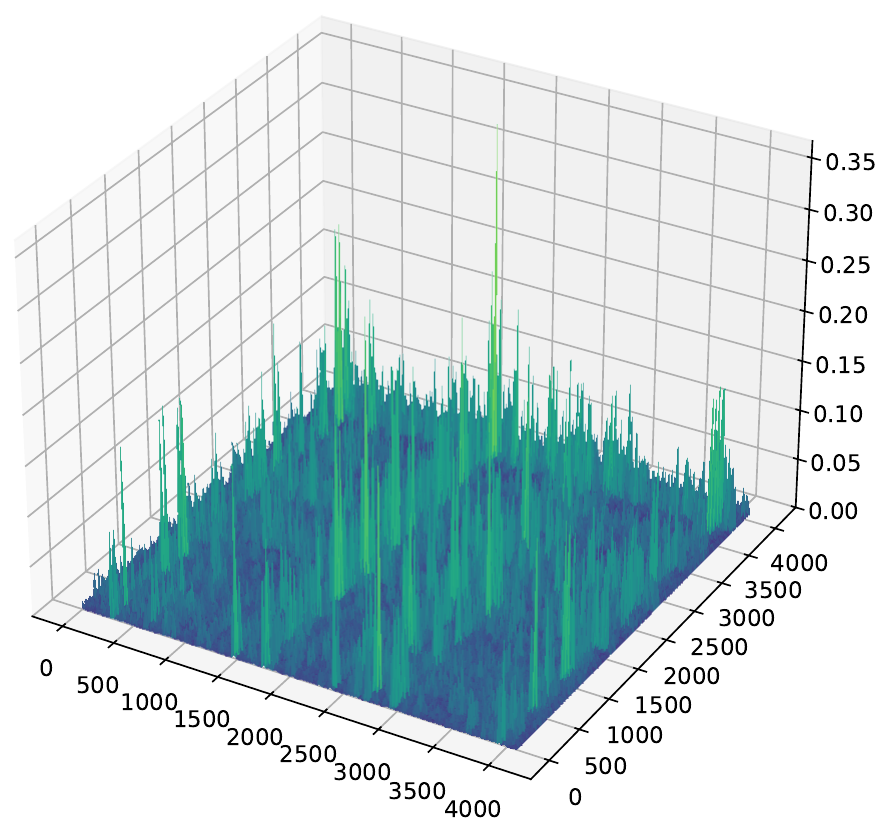}}
    \end{subfigure}%
    \hfill
    \begin{subfigure}[b]{0.24\textwidth}
        \centering
        \includegraphics[width=\linewidth]{figures/weight_dist/q_diff_greedy.pdf}
        \caption{Diff. (Greedy)}
        \label{fig:init-diff-greedy}
    \end{subfigure}%
    \hfill
    \begin{subfigure}[b]{0.24\textwidth}
        \centering
        \includegraphics[width=\linewidth]{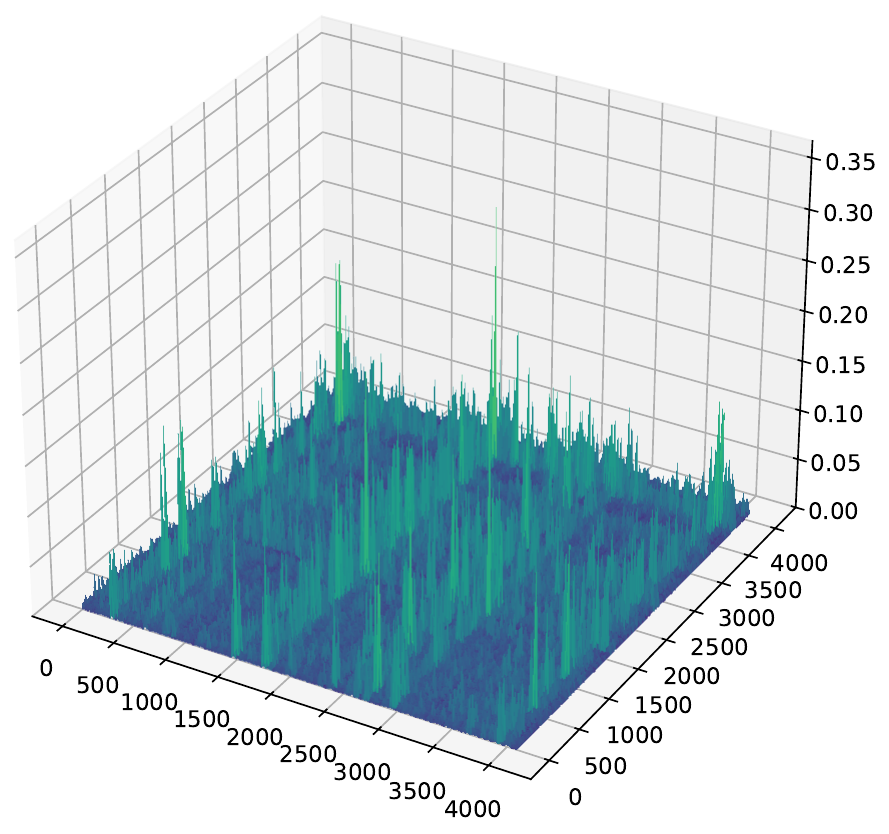}
        \caption{Diff. (Iterative)}
        \label{fig:init-diff-iterative}
    \end{subfigure}%
    \hfill
    \begin{subfigure}[b]{0.24\textwidth}
        \centering
        \includegraphics[width=\linewidth]{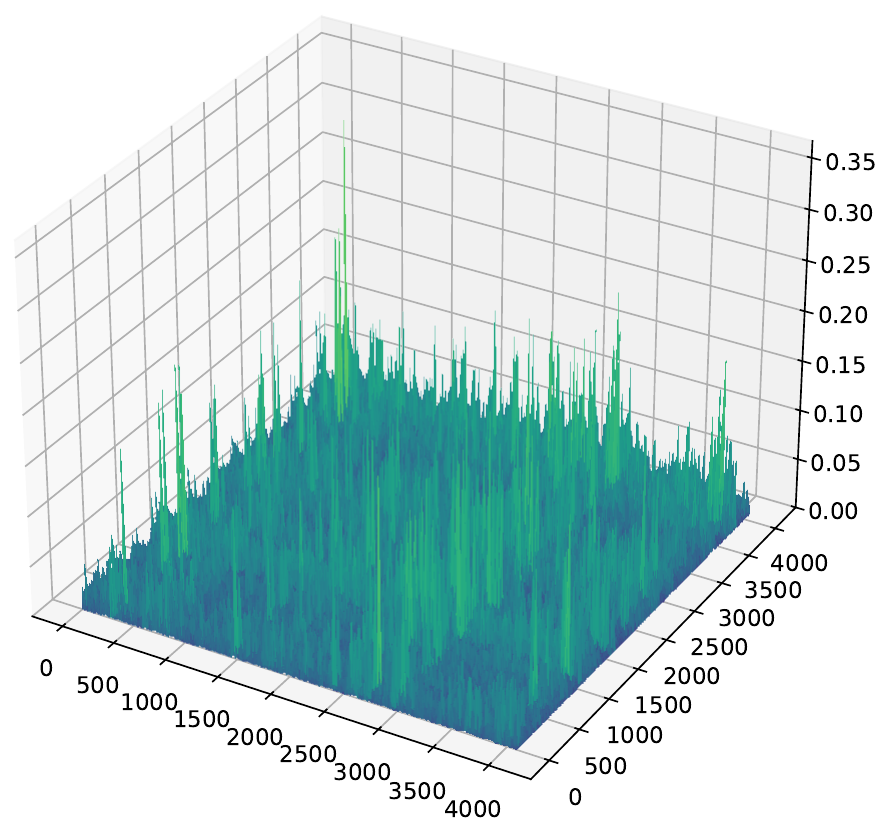}
        \caption{Diff. (Iter + I/O)}
        \label{fig:init-diff-io}
    \end{subfigure}

    \caption{
        \textbf{Visual analysis of weight initialization for the first layer's \texttt{q\_proj} matrix of the Llama2-7B model.}
        The top row displays the original full-precision weight ($\W_{\mathrm{FP}}$) alongside its initial approximations from three methods: (b) Greedy SVID, (c) Iterative SVID, and (d) Iterative SVID with I/O Channel Importance Scaling.
        The bottom row shows the corresponding difference matrices ($\W_{\mathrm{FP}} - \hat{\W}_{\mathrm{init}}$), illustrating the initial error structure.
        Our function-aware initialization produces a qualitatively different structure compared to the others, which is reflected in its distinct error pattern.
    }
    \label{fig:initialization_analysis}
\end{figure}

Stable initialization is paramount in the low-bit regime, as the initial quantization error spike can destabilize QAT. On the Llama2-7B model, we evaluate our proposed techniques--Iterative Residual SVID and I/O Channel Importance Scaling (\Cref{subsec:stable_init})--by measuring both the weight reconstruction error (Avg. MAE, MSE) and the initial task loss (Knowledge Distillation (KD) loss) before the first training step.

\Cref{tab:initialization_analysis} details the results for the first \texttt{q\_proj} layer and reveals a crucial insight. As our baseline, Greedy SVID is a non-iterative decomposition that finalizes each path sequentially without the co-adaptation enabled by our iterative approach. First, regarding \textbf{Iterative Refinement}, moving from Greedy SVID to Iterative Residual SVID consistently improves weight reconstruction (\eg Avg. MSE drops 0.150 $\to$ 0.122) and substantially reduces the initial KL divergence loss (17,152 $\to$ 13,760), confirming mitigation of scheduling bias. Second, adding \textbf{I/O Channel Importance Scaling} to the iterative process yields a striking result: while reconstruction error increases significantly (Avg. MSE 0.122 $\to$ 0.302), the KL divergence loss \textit{plummets} dramatically (13,760 $\to$ 2,672, an 81\% reduction).

This confirms that extreme quantization should prioritize preserving \textit{functionality} over merely approximating \textit{weights}. I/O Channel Importance Scaling allocates the limited 2-bit capacity to critical channels based on activation and gradient statistics (\Cref{subsec:stable_init}), sacrificing the reconstruction of less important weights. This trade-off is visually stark in \Cref{fig:initialization_analysis}. While Iterative SVID produces a lower-error approximation than Greedy SVID (comparing \Cref{fig:init-diff-iterative} to \Cref{fig:init-diff-greedy}), the function-aware I/O Scaling method yields a visibly larger reconstruction error (\Cref{fig:init-diff-io}). Despite this higher weight-level discrepancy, its focus on functional saliency provides a far superior starting point for QAT, as evidenced by the dramatic reduction in initial task loss.

\section{Extended Results}
\label{appendix:ext_results}

\paragraph{Detailed Zero-Shot Reasoning Accuracy.}
\Cref{tab:zero_shot_main_detail_appendix} and \Cref{tab:zero_shot_gemma3_detail_appendix} provide a detailed breakdown of the zero-shot reasoning accuracy across five common benchmarks, complementing the average scores reported in the main text.

\begin{table*}[ht]
  \centering
  \caption{\textbf{Detailed Zero-Shot Reasoning Accuracy on Llama Models} (\%). Comparison of FP16 against leading 2-bit methods on five common benchmarks.}
\resizebox{0.90\textwidth}{!}{
  \setlength{\tabcolsep}{5.5pt}
    \begin{tabular}{llcccccc}
    \toprule
    \textbf{Models} & \textbf{Method} & \textbf{WinoGrande$\uparrow$} & \textbf{HellaSwag$\uparrow$} & \textbf{ARC-e$\uparrow$} & \textbf{ARC-c$\uparrow$}  & \textbf{PIQA$\uparrow$}   & \textbf{Average$\uparrow$} \\
    \midrule
    \multirow{4}[2]{*}{Llama2-7B} & FullPrecision & 67.80  & 56.71  & 69.28  & 39.93  & 78.29  & 62.40  \\
          & QTIP & 64.64  & 53.09  & 65.57  & 35.67  & \textbf{75.90}  & 58.97  \\
          & DBF & 63.61  & 52.44  & 64.73  & 35.58  & 75.84  & 58.44  \\
          & RaBiT (Ours) & \textbf{67.80}  & \textbf{53.52}  & \textbf{72.43}  & \textbf{37.88}  & \textbf{75.90}  & \textbf{61.51}  \\
    \midrule
    \multirow{4}[2]{*}{Llama2-13B} & FullPrecision & 69.93  & 59.64  & 73.19  & 45.73  & 78.67  & 65.43 \\
          & QTIP & \textbf{67.56}  & \textbf{57.4}  & \textbf{70.8}  & \textbf{41.46}  & 77.37  & \textbf{62.92}  \\
          & DBF & 67.09  & 56.6  & 69.02  & 38.74  & \textbf{78.18}  & 61.93  \\
          & RaBiT (Ours) & \textbf{67.56}  & 56.71  & 69.06  & 39.76  & 77.42  & 62.10 \\
    \midrule
    \multirow{4}[2]{*}{Llama3-8B} & FullPrecision & 72.93  & 60.08  & 80.30  & 50.17  & 79.76  & 67.80  \\
          & QTIP & \textbf{70.24}  & \textbf{55.53}  & 75.29  & 41.64  & 76.71  & 63.88  \\
          & DBF & 68.90  & 54.49  & 74.62  & 39.76  & 76.44  & 62.84  \\
          & RaBiT (Ours) & 69.37  & 55.13  & \textbf{75.37}  & \textbf{42.83}  & \textbf{77.96}  & \textbf{64.13}  \\
    \bottomrule
\end{tabular}%
    }
\label{tab:zero_shot_main_detail_appendix}
\end{table*}

\begin{table*}[ht]
  \centering
  \caption{\textbf{Detailed Zero-Shot Reasoning Accuracy on Gemma Models} (\%). Comparison of FP16 against leading 2-bit methods on five common benchmarks.}
\resizebox{0.90\textwidth}{!}{
  \setlength{\tabcolsep}{5.5pt}
    \begin{tabular}{llcccccc}
    \toprule
    \textbf{Models} & \textbf{Method} & \textbf{WinoGrande$\uparrow$} & \textbf{HellaSwag$\uparrow$} & \textbf{ARC-e$\uparrow$} & \textbf{ARC-c$\uparrow$}  & \textbf{PIQA$\uparrow$}   & \textbf{Average$\uparrow$} \\
    \midrule
    \multirow{4}{*}{Gemma3-1B} 
        & FullPrecision & 59.59 & 47.30 & 72.22 & 35.32 & 74.65 & 57.82 \\
        & QTIP          & 54.62 & 38.24 & 63.93 & 25.85 & 68.88 & 50.30 \\
        & DBF           & \textbf{58.01} & 40.37 & 62.92 & 28.41 & 70.18 & 51.98 \\
        & RaBiT (Ours)  & 56.59 & \textbf{42.94} & \textbf{64.52} & \textbf{29.44} & \textbf{72.42} & \textbf{53.18} \\
    \midrule
    \multirow{4}{*}{Gemma3-4B} 
        & FullPrecision & 69.22 & 56.77 & 81.52 & 51.45 & 79.05 & 67.60 \\
        & QTIP          & \textbf{66.85} & 52.25 & \textbf{77.53} & \textbf{44.62} & 76.12 & \textbf{63.47} \\
        & DBF           & 63.69 & 50.15 & 74.74 & 40.87 & 75.08 & 60.91 \\
        & RaBiT (Ours)  & 65.19 & \textbf{52.57} & 75.04 & 41.38 & \textbf{76.88} & 62.21 \\
    \midrule
    \multirow{4}{*}{Gemma3-12B} 
        & FullPrecision & 75.45 & 61.98 & 87.08 & 61.60 & 81.12 & 73.45 \\
        & QTIP          & \textbf{72.69} & 57.99 & \textbf{84.09} & \textbf{54.95} & 78.73 & \textbf{69.69} \\
        & DBF           & 72.14 & 57.20 & 82.49 & 52.05 & 77.97 & 68.37 \\
        & RaBiT (Ours)  & 72.30 & \textbf{58.45} & 82.41 & 52.13 & \textbf{78.95} & 68.85 \\
    \bottomrule
\end{tabular}

    }
\label{tab:zero_shot_gemma3_detail_appendix}
\end{table*}

\section{Inference Performance Analysis}
\label{appendix:inference_performance_analysis}
\subsection{Kernel Design}
\label{appendix:kernel_design}
Our CUDA kernels implement binary GEMV operations tailored to the memory-bound regime typical of the decoding phase in LLM inference. The design centers on bit-packing to reduce global memory traffic, with a latency-tolerant and matmul-free compute pipeline that leverages register-level staging.

\paragraph{Weight Packing.} To reduce memory traffic, each group of 32 columns is mapped to a \texttt{uint32\_t}, with $+1 \mapsto 0$ and $-1 \mapsto 1$. We then group the 32-bit words into \texttt{uint2} or \texttt{PackedBits3} (3 \texttt{uint32\_t} weights with padding), for 2-bit (2 binary weights) and 3-bit (3 binary weights) models, respectively. Rows are interleaved into warp-sized groups, ensuring that a warp issues full coalesced memory transactions when loading weights. Our efficient packing reduces the raw footprint of weights by a factor of $32\times$, compared to full-precision weights.

\paragraph{Compute Pipeline.} Each warp is assigned a set of output rows to avoid inter-warp synchronization. Input activations ($\mathbf{x}$) and column scales ($\mathbf{g}$) are read as vectorized \texttt{uint4} chunks. Binary signs are applied via lane-local bit shifts and XOR masks, instead of matrix multiplication. The kernel uses simple yet effective pipelining: while one tile of data is consumed, the subsequent tile is prefetched into registers. Accumulation proceeds using \texttt{half2} fused multiply-add intrinsics (\texttt{\_\_hfma2}), which increase arithmetic throughput without resorting to shared memory. Finally, reductions across threads in a warp are performed with shuffle operations, and output scale factors ($\mathbf{h}$) are applied in \texttt{fp16} precision. Notably, our architecture enables per-path parallelizable computation - instead of an n-bit weight, we parallelize with n 1-bit operations.

Efficient weight packing and pipelining reduce global memory access and raise the utilization of execution units on the GPU. The kernel therefore shifts the limiting factor from raw memory bandwidth toward register throughput, yielding measurable efficiency gains during the decoding stage of LLM inference, showing remarkable performance. 

\begin{figure}[b!]
    \centering
    \includegraphics[width=0.8\linewidth]{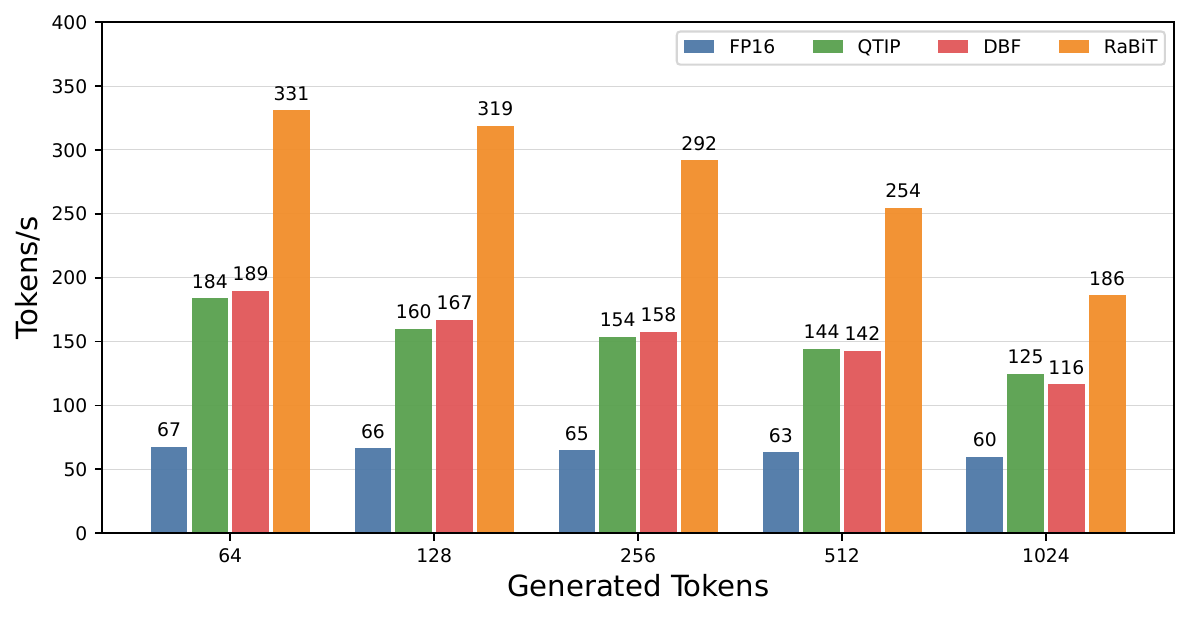}
    \caption{\textbf{End-to-End Decoding Throughput (Tokens/Second) for Llama2-7B on an NVIDIA RTX 4090 Across Various Generated Token Lengths.} RaBiT's parallel architecture consistently delivers superior performance over other 2-bit methods.}
    \label{fig:llama2_7b_decode_bar}
\end{figure}

\subsection{More Comparisons}
\label{appendix:more_comparisons}
To provide a more detailed analysis of the end-to-end inference speed, we benchmarked RaBiT against several key baselines: the full-precision (FP16) model, QTIP as the state-of-the-art Vector Quantization (VQ) method, and DBF, which features a similar stacked binary architecture. For QTIP, we utilized the publicly available CUDA kernels from the official implementation\footnote{\url{https://github.com/Cornell-RelaxML/qtip}}. For DBF, which also uses a stacked binary design but executes its two paths sequentially, we developed an optimized CUDA kernel that runs approximately 21\% faster than their public Triton-based implementation to ensure a fair and robust comparison. All evaluations were conducted on an NVIDIA RTX 4090 with \texttt{torch.compile} using the Llama2-7B model.

The results, depicted in \Cref{fig:llama2_7b_decode_bar}, were benchmarked across a range of generated token lengths (64, 128, 256, 512, and 1024) for a comprehensive analysis. As expected, all 2-bit methods significantly outperform the FP16 baseline due to the 8$\times$ reduction in memory bandwidth requirements. More importantly, \textbf{RaBiT demonstrates a substantial performance advantage, achieving nearly twice the decoding throughput} of the other 2-bit quantization methods. This speed-up stems directly from the efficiency of our parallel, matmul-free architecture. Unlike DBF, which is bottlenecked by its sequential computation of two binary paths, RaBiT's fully parallel design allows it to maximally leverage the benefits of its efficient binary cores. While the absolute tokens/second rate naturally decreases with longer generation sequences, the relative performance gap between the methods remains consistent, confirming the robustness of RaBiT's architectural advantage.

\section{Hyperparameters}
\label{appendix:hyper_parameters}
\subsection{Training Details}
We detail the hyperparameters used for our Quantization-aware training (QAT) experiments in \Cref{tab:training_detail}. All models were trained for 6 epochs using the Muon optimizer~\citep{jordan2024muon} with a cosine learning rate decay schedule. The models were initialized using our proposed function-aware strategy, with a fixed SVID iteration count of $T_{\max}=20$. Key hyperparameters, such as the learning rate and the I/O Channel Importance Scaling intensities ($\alpha_{\mathrm{in}}, \alpha_{\mathrm{out}}$), were fine-tuned for each specific model to achieve the best performance. All experiments were conducted on a single node equipped with four NVIDIA H100 GPUs.

\begin{table}[ht]
  \caption{\textbf{Hyperparameter Configuration.} We detail the training settings including learning rates, intensities $(\alpha_{in}, \alpha_{out})$, and batch information for the Llama and Gemma model families.}
  \label{tab:training_detail}
  \centering
  \resizebox{0.75\textwidth}{!}{
    \begin{tabular}{@{}llcccccc@{}}
    \toprule
    \multicolumn{2}{c}{Training Setup} & \multicolumn{2}{c}{Llama2} & \multicolumn{1}{c}{Llama3} & \multicolumn{3}{c}{Gemma3} \\[4pt]
    \cmidrule(lr){3-4} \cmidrule(lr){5-5} \cmidrule(lr){6-8}
    Bit & Target & 7B & 13B & 8B & 1B & 4B & 12B \\ 
    \midrule
    \multirow{2}{*}{2} 
      & Intensities ($\alpha_{\mathrm{in}}, \alpha_{\mathrm{out}}$) & (0.8, 0.65) & (0.95, 0.45) & (0.85, 0.7) & (0.85, 0.7) & (0.95, 0.7) & (0.75, 0.6) \\
      & Iteration ($T_{\max}$) & 20 & 20 & 20 & 20 & 20 & 20 \\
      & Learning Rate & 12e-6 & 1e-5 & 1e-5 & 1e-5 & 1e-5 & 5e-6 \\
      & Epoch & 6 & 6 & 6 & 6 & 6 & 6 \\
      & \# GPUs & 1 $\times$ 4 & 1 $\times$ 4 & 1 $\times$ 4 & 1 $\times$ 4 & 1 $\times$ 4 & 1 $\times$ 4  \\
      & \# Training Hours & 39 & 56 & 38 & 8 & 23 & 67 \\
    \midrule
    \multirow{3}{*}{3} 
      & Intensities ($\alpha_{\mathrm{in}}, \alpha_{\mathrm{out}}$) & (0.8, 0.65) & (0.95, 0.45) & (0.85, 0.7) & - & - & - \\
      & Iteration ($T_{\max}$) & 20 & 20 & 20 & - & - & - \\
      & Learning Rate & 1e-5 & 1e-5 & 1e-5 & - & - & - \\
      & Epoch & 6 & 6 & 6 & - & - & - \\
      & \# GPUs & 1 $\times$ 4 & 1 $\times$ 4 & 1 $\times$ 4 & - & - & -  \\
      & \# Training Hours & 46 & 88 & 44 & - & - & - \\
    \bottomrule
\end{tabular}%
  }
\end{table}

\begin{table}[h]
    \centering
    \caption{\textbf{Training Wall-Clock Time Overhead (Llama2-7B, 4×H100).} RaBiT incurs a modest overhead over Single-Path INT2 QAT while being substantially faster and more memory-efficient than Standard Independent QAT.}
    \label{tab:training_overhead}
    \vspace{-5pt}
    \small
    \resizebox{0.85\textwidth}{!}{
    \begin{tabular}{llrr}
        \toprule
        \textbf{Method} & \textbf{Architecture} & \textbf{Wall-Clock Time / Epoch} & \textbf{Overhead} \\
        \midrule
        Single-Path INT2 QAT & Non-Residual & 5.67 h & Baseline \\
        Standard Independent QAT & Residual (2 latent weights) & 10.33 h & +82.2\% \\
        \textbf{RaBiT (Ours)} & Residual (1 shared weight) & \textbf{6.50 h} & \textbf{+14.7\%} \\
        \bottomrule
    \end{tabular}
    }
\end{table}

\subsection{Grid Search for I/O Channel Importance Scaling Intensities}
\label{appendix:io_scale}
To determine the optimal intensity hyperparameters for our I/O Channel Importance Scaling (\Cref{subsec:stable_init}), we performed a comprehensive grid search. The objective was to identify the values of $\alpha_{\mathrm{in}}$ and $\alpha_{\mathrm{out}}$ that minimized the initial Knowledge Distillation (KD) loss post-initialization. This process utilized a calibration dataset of 128 samples randomly selected from the training data to measure the loss.

The example results of this search on the Llama2-7B model are detailed in \Cref{tab:io_scale_grid_search}. We observed a clear optimum, with the minimum initial KL divergence loss of 2,672 achieved at the configuration of $\alpha_{\mathrm{in}}=0.80$ and $\alpha_{\mathrm{out}}=0.65$. This finding underscores the importance of a balanced preconditioning strategy that considers both input activation statistics and output gradient magnitudes. \textbf{We repeated this grid search process for all other models to find their optimal alpha values.}

\begin{table}[ht]
    \centering
    \caption{\textbf{Grid Search Results for I/O Channel Importance Scaling Intensities ($\alpha_{\mathrm{in}}$, $\alpha_{\mathrm{out}}$) on Llama2-7B.} The metric is the Initial KL Divergence Loss (Lower is better). The optimal configuration is highlighted in bold.}
    \label{tab:io_scale_grid_search}
    \resizebox{0.30\textwidth}{!}{
    \begin{tabular}{c|cccc}
        \toprule
        \multirow{2}{*}{$\alpha_{\mathrm{out}}$} & \multicolumn{4}{c}{$\alpha_{\mathrm{in}}$} \\
        & 0.75 & \textbf{0.80} & 0.85 & 0.90  \\
        \midrule
        0.55 & 3,100 & 2,932 & 2,984 & 3,108  \\
        0.60 & 2,932 & 2,938 & 2,971 & 3,143  \\
        \textbf{0.65} & 3,167 & \textbf{2,672} & 2,697 & 3,063  \\
        0.70 & 3,083 & 2,821 & 2,983 & 3,462  \\
        \bottomrule
\end{tabular}

    }
\end{table}

\subsection{SVID Iteration Convergence Analysis}
\label{appendix:svid_iters}
The Iterative Residual SVID initialization (\Cref{subsec:stable_init}) aims to mitigate the scheduling bias inherent in standard greedy initialization. We analyzed the required number of iterations ($T_{\max}$) for convergence on the Llama2-7B model. We measured the Initial KL divergence loss as the iterations progressed from 1 (equivalent to Greedy SVID) up to 35.

The results, shown in \Cref{fig:kl_loss_svid_iter}, indicate that the initialization quality improves rapidly in the initial phase. The loss stabilizes significantly around 15 iterations, and the optimum is reached at 20 iterations. Beyond this point, further iterations do not provide additional benefits. Based on this analysis, we selected $T_{\max}=20$ as the default setting for RaBiT initialization, providing a robust convergence margin while maintaining an optimal balance between initialization quality and computational cost.

\begin{figure}[ht]
    \centering
    \includegraphics[width=0.47\linewidth]{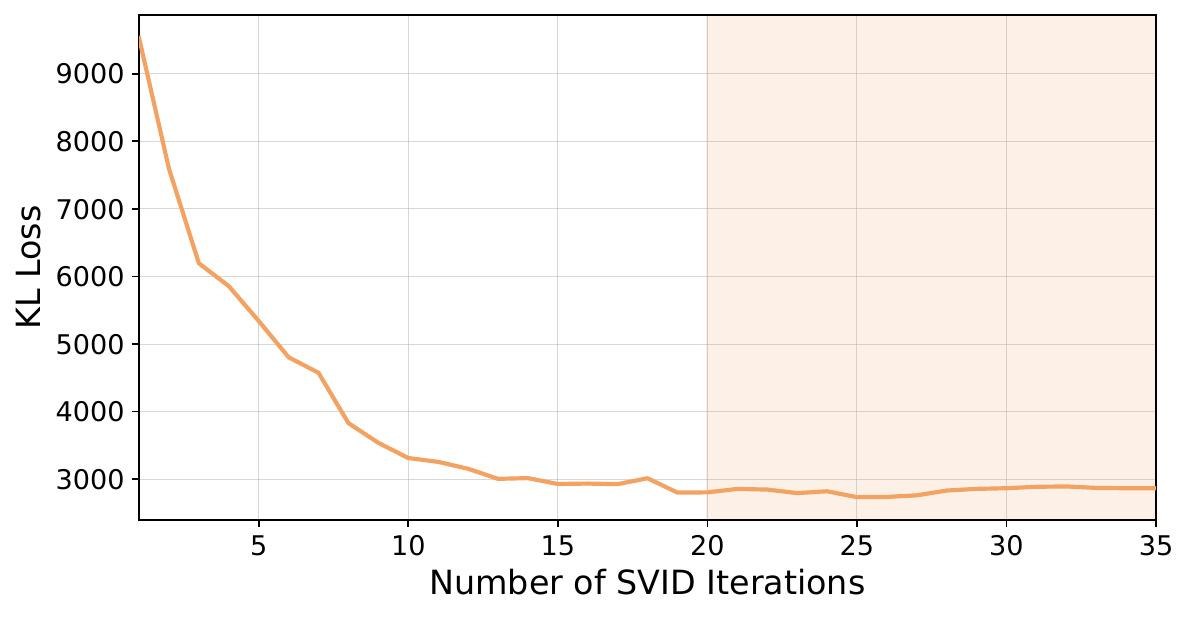}
    \caption{\textbf{Convergence Analysis of Iterative Residual SVID on Llama2-7B}. The metric is the Initial KL Divergence Loss (Lower is better). Convergence stabilizes around 20 iterations.}
    \label{fig:kl_loss_svid_iter}
\end{figure}

\subsection{Robustness to Random Seed and Optimizers}
\label{appendix:robustness_efficiency}

To ensure the reliability of RaBiT, we provide additional analyses on training stability across random seeds and  optimizer sensitivity (AdamW vs. Muon).

\paragraph{Multi-Seed Stability.}
We conducted experiments on Llama2-7B using 5 different random seeds to verify reproducibility. As shown in \Cref{tab:seed_stability}, RaBiT exhibits exceptional stability with a negligible standard deviation in both Perplexity (PPL) and Zero-shot QA accuracy, confirming that our results are robust and not artifacts of a specific seed.

\begin{table}[h]
    \centering
    \caption{\textbf{Multi-Seed Stability Analysis (Llama2-7B, 2-bit).} RaBiT demonstrates consistent performance across five random seeds.}
    \label{tab:seed_stability}
    \vspace{-5pt}
    \small
    \resizebox{0.75\textwidth}{!}{
    \begin{tabular}{lcccccc}
        \toprule
        \textbf{Metric} & \textbf{Seed 123} & \textbf{Seed 1019} & \textbf{Seed 1024} & \textbf{Seed 1112} & \textbf{Seed 1204} & \textbf{Avg. (Std.)} \\
        \midrule
        WikiText-2 PPL ($\downarrow$) & 5.78 & 5.77 & 5.77 & 5.78 & 5.78 & \textbf{5.78} (0.005) \\
        C4 PPL ($\downarrow$)    & 7.63 & 7.62 & 7.63 & 7.63 & 7.64 & \textbf{7.63} (0.006) \\
        QA Avg. ($\uparrow$)     & 61.65 & 61.60 & 61.36 & 61.90 & 61.34 & \textbf{61.57} (0.23) \\
        \bottomrule
    \end{tabular}
    }
\end{table}

\paragraph{Optimizer Sensitivity (AdamW vs. Muon).}
While our main experiments utilize the Muon optimizer for memory efficiency, we validated RaBiT with the standard AdamW optimizer to rule out optimizer dependency. On Llama2-7B, RaBiT trained with \textbf{AdamW achieved a PPL of 5.78}, which is statistically identical to the \textbf{5.78 PPL} obtained with Muon. This confirms that the performance gains stem from the coupled residual architecture, not the choice of optimizer.

\section{Extended Analysis of Inter-Path Adaptation}
\label{appendix:inter-path adaptation}

To provide a more granular view of the training dynamics, we conduct a layer-wise analysis of the Mean Squared Error (MSE) decomposition for the Llama2-7B model, visualized in \Cref{fig:inter_path_corr_layerwise}. This analysis offers two key insights into RaBiT's structural advantages over Standard QAT.

First, the results empirically confirm our central hypothesis across the network's depth. For most layers, RaBiT consistently generates a substantial negative covariance (the red-dashed component), which acts as a significant loss-reducing bonus, thereby lowering the total MSE. In contrast, Standard QAT fails to establish this effective error-cancellation, exhibiting a much smaller covariance term that provides negligible benefit. This provides strong visual evidence that RaBiT's coupled training successfully enforces the intended error-correction hierarchy, while Standard QAT suffers from the performance degradation of inter-path adaptation.

Second, and more strikingly, the analysis reveals RaBiT's ability to overcome a critical optimization challenge in extreme quantization: layer sensitivity. The Standard QAT baseline exhibits an exceptionally high MSE in the initial layers, a phenomenon consistent with the known sensitivity of early network layers to input distributions and quantization errors, as also observed in other LLMs by \citep{zhang2025towards, zhangtwo2024investigating}. RaBiT, however, dramatically suppresses this MSE peak. This suggests its benefits extend beyond merely enforcing anti-correlation. The fact that RaBiT tames this instability indicates that our method may resolve a more fundamental bottleneck in MSE-based QAT that has historically hindered extreme quantization in conventional architectures. While we designed RaBiT to foster negative correlation, its success in stabilizing these sensitive layers points to a deeper robustness. A full investigation into how residual coupling imparts this stability is a compelling direction for future research.

\begin{figure}[H]
\centering
\includegraphics[width=\textwidth]{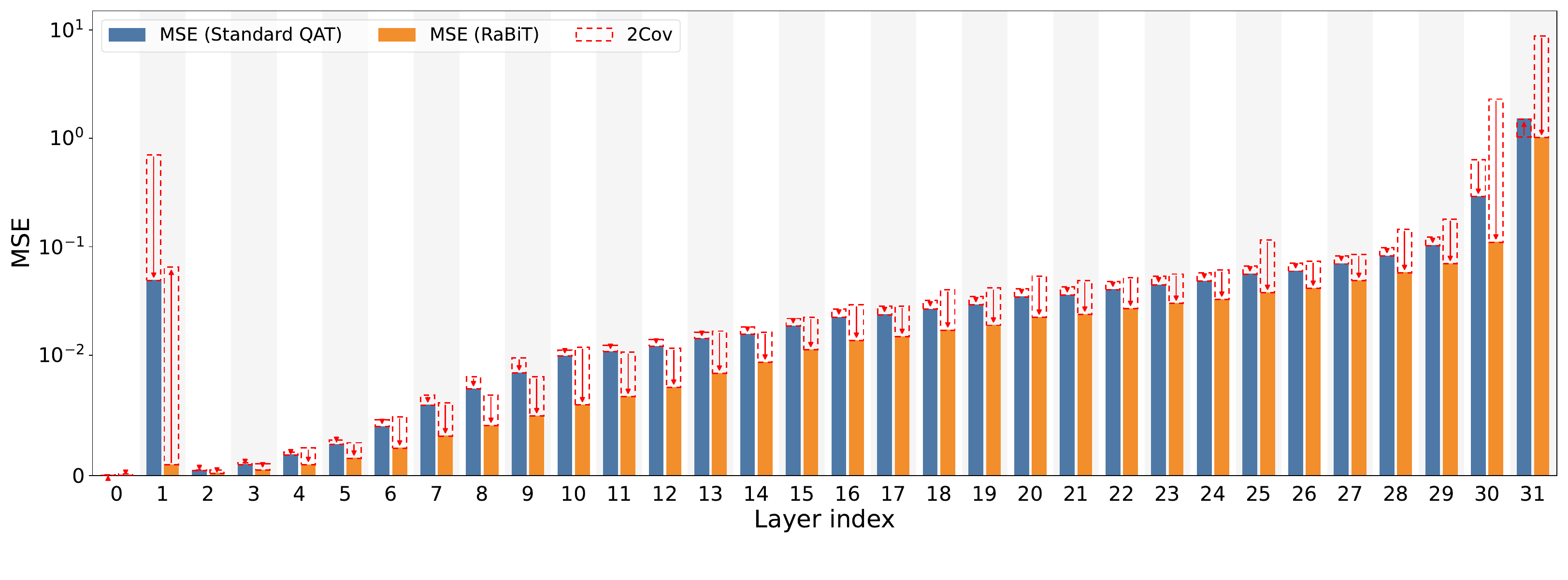}
\caption{
    \textbf{Layer-Wise MSE Decomposition in Llama2-7B's \texttt{down\_proj} Layers.}
    The bars compare the total Mean Squared Error (MSE) for Standard QAT (blue) and RaBiT (orange). The top of each bar represents the base error term ($C'$), while the red-dashed component visualizes twice the covariance ($2 \times \text{Cov}$). RaBiT consistently generates a large negative covariance, which actively reduces the total MSE, demonstrating effective error cancellation. Notably, RaBiT also suppresses the extremely high MSE peak observed in the early layers of the Standard QAT baseline, indicating its robustness against layer sensitivity.
}

\label{fig:inter_path_corr_layerwise}
\end{figure}

\newpage

\section{Generated Samples}

\subsection*{Sample 1: Newton}
\begin{flushleft}
\noindent\textbf{Prompt:} Newton discovered the law of universal gravitation. Universal gravitation is
\end{flushleft}

\begin{xltabular}{\linewidth}{@{}X@{}}
\toprule
Generated Output \\
\midrule
\endfirsthead

\multicolumn{1}{c}{\emph{Sample 1: Newton -- Continued}} \\
\toprule
Generated Output (Llama2-7b Rabitized 2bit) \\
\midrule
\endhead

\midrule
\multicolumn{1}{r}{\emph{Continued on next page}} \\
\endfoot
\bottomrule
\endlastfoot

\underline{Newton discovered the law of universal gravitation. Universal gravitation is} the force of gravity which draws objects towards the earth and other massive bodies. Gravity is the universal attraction between two objects and is proportional to the product of the masses of the two objects and inversely proportional to the square of the distance between them. Newton explained the force of gravity as being caused by the presence of matter in the universe. He stated that all matter attracted all other matter and that the tendency to move towards a massive body was proportional to the product of the masses of the objects. Newton developed the law of universal gravitation so that the attraction between two bodies could be calculated.  \\
\end{xltabular}
\medskip

\subsection*{Sample 2: Adam Smith}
\begin{flushleft}
\noindent\textbf{Prompt:} Adam Smith wrote The Wealth of Nations. The Wealth of Nations is
\end{flushleft}

\begin{xltabular}{\linewidth}{@{}X@{}}
\toprule
Generated Output \\
\midrule
\endfirsthead

\multicolumn{1}{c}{\emph{Sample 2: Adam Smith -- Continued}} \\
\toprule
Generated Output (Llama2-7b Rabitized 2bit) \\
\midrule
\endhead

\midrule
\multicolumn{1}{r}{\emph{Continued on next page}} \\
\endfoot

\bottomrule
\endlastfoot

\underline{Adam Smith wrote The Wealth of Nations. The Wealth of Nations is} the first book to outline the modern principles of capitalism and to make the case for it. Smith’s work is widely considered to be the first comprehensive system of political economy and one of the most important works of political philosophy. The Wealth of Nations is the foundation of modern capitalism. The Wealth of Nations is a book by Adam Smith, first published in 1776, which describes the nature of economic systems. Adam Smith is regarded as the father of modern economics. The Wealth of Nations is widely considered to be the first comprehensive system of political economy and one of the most important works of political philosophy. It is also one of the most influential books on economics. Adam Smith is regarded as the father of modern economics. \\
\end{xltabular}
\medskip

\subsection*{Sample 3: DNA double helix}
\begin{flushleft}
\noindent\textbf{Prompt:} Watson and Crick discovered the DNA double helix. The DNA double helix is
\end{flushleft}

\begin{xltabular}{\linewidth}{@{}X@{}}
\toprule
Generated Output \\
\midrule
\endfirsthead

\multicolumn{1}{c}{\emph{Sample 3: DNA double helix -- Continued}} \\
\toprule
Generated Output (Llama2-7b Rabitized 2bit) \\
\midrule
\endhead

\midrule
\multicolumn{1}{r}{\emph{Continued on next page}} \\
\endfoot

\bottomrule
\endlastfoot

\underline{Watson and Crick discovered the DNA double helix. The DNA double helix is} a structure found in the nucleus of every living cell. The double helix is the most fundamental structure in DNA. The structure is made of two single strands of DNA. The double helix was discovered in 1953 by James Watson and Francis Crick. In 1953, James Watson and Francis Crick discovered the double helix structure of DNA. The double helix structure is the most fundamental structure in DNA. The double helix structure is made of two single strands of DNA. The double helix is found in the nucleus of every living cell. \\
\end{xltabular}
\medskip

\newpage
\section{Algorithms}
% ===================== ALGORITHM 1 =====================
\begin{algorithm}[H]
    \caption{RaBiT Initialization: Calibrated Iterative Residual SVID}
    \label{alg:initialization}
    \centering
    \resizebox{0.95\linewidth}{!}{%
        \begin{minipage}{\linewidth}
            \begin{algorithmic}[1]
\STATE \textbf{Require:} Pretrained weight $\mat{W}_{\mathrm{FP}}$, Number of paths $k$, Max iterations $T_{\max}$
\STATE \textbf{Require:} Calibration stats $\vec{s}_{\mathrm{in}}, \vec{s}_{\mathrm{out}}$ and intensities $\alpha_{\mathrm{in}}, \alpha_{\mathrm{out}}$
\STATE \textbf{Output:} Initialized scales $\{(\vec{g}_i, \vec{h}_i)\}_{i=1}^k$
\medskip
\STATE \textcolor{gray!70!black}{\textit{// Step 1: I/O Channel Importance-Calibrated Preconditioning}}
\STATE Normalize: $\vec{s}_{\mathrm{in}} \leftarrow \vec{s}_{\mathrm{in}} / \max(\vec{s}_{\mathrm{in}})$, \quad $\vec{s}_{\mathrm{out}} \leftarrow \vec{s}_{\mathrm{out}} / \max(\vec{s}_{\mathrm{out}})$
\STATE Precondition: $\mat{W}' \leftarrow \vec{s}_{\mathrm{out}}^{\alpha_{\mathrm{out}}} \odot \mat{W}_{\mathrm{FP}} \odot \vec{s}_{\mathrm{in}}^{\alpha_{\mathrm{in}}}$
\medskip
\STATE \textcolor{gray!70!black}{\textit{// Step 2: Iterative Residual SVID}}
\STATE Initialize $\hat{\mat{W}}_i'^{(0)} \leftarrow \mat{0}$ for $i=1, \dots, k$
\FOR{$t=1$ to $T_{\max}$}
    \FOR{$i=1$ to $k$}
        \STATE \textcolor{gray!70!black}{\textit{// Calculate target residual (Gauss-Seidel style update)}}
        \STATE $\mat{R}_i'^{(t)} \leftarrow \mat{W}' - \left( \sum\nolimits_{j<i} \hat{\mat{W}}_j'^{(t)} + \sum\nolimits_{j>i} \hat{\mat{W}}_j'^{(t-1)}\right)$
        \STATE \textcolor{gray!70!black}{\textit{// Apply SVID to find the best rank-1 approximation}}
        \STATE $(\mat{B}_i'^{(t)}, \vec{g}_i'^{(t)}, \vec{h}_i'^{(t)}) \leftarrow \SVID(\mat{R}_i'^{(t)})$
        \STATE $\hat{\mat{W}}_i'^{(t)} \leftarrow \vec{g}_i'^{(t)} \odot \mat{B}_i'^{(t)} \odot \vec{h}_i'^{(t)}$
    \ENDFOR
\ENDFOR
\medskip
\STATE \textcolor{gray!70!black}{\textit{// Step 3: Map scales back to the original weight domain}}
\FOR{$i=1$ to $k$}
    \STATE $\vec{g}_i \leftarrow \vec{s}_{\mathrm{out}}^{-\alpha_{\mathrm{out}}} \odot \vec{g}_i'^{(T_{\max})}$
    \STATE $\vec{h}_i \leftarrow \vec{s}_{\mathrm{in}}^{-\alpha_{\mathrm{in}}} \odot \vec{h}_i'^{(T_{\max})}$
\ENDFOR
\end{algorithmic}
        \end{minipage}%
    }
\end{algorithm}

\vspace{-10pt}

% ===================== ALGORITHM 2 =====================
\begin{algorithm}[H]
    \caption{RaBiT: Residual-Aware Binarization Training (One Step)}
    \label{alg:training}
    \centering
    \resizebox{0.95\linewidth}{!}{%
        \begin{minipage}{\linewidth}
            \begin{algorithmic}[1]
\STATE \textbf{Parameters:} Shared full-precision weight $\mat{W}_{\mathrm{FP}}$; Scales $\{(\vec{g}_i, \vec{h}_i)\}_{i=1}^k$.
\STATE \textbf{Input:} Minibatch Input $\mat{X}$, Targets $\mat{T}$.
\vspace{1mm}
\STATE \textcolor{gray!70!black}{\textit{// 1. Forward Pass: On-the-fly Residual Coupling}}
\STATE $\mat{R}_0 \leftarrow \mat{W}_{\mathrm{FP}}$. \quad \textcolor{gray!70!black}{\textit{// Initialize residual with the shared weight}}
\STATE $\hat{\mat{W}}^{(k)} \leftarrow \mat{0}$. \quad \textcolor{gray!70!black}{\textit{// Effective weight for the entire layer}}
\FOR{$i=1$ to $k$}
    \STATE \textcolor{gray!70!black}{\textit{// Sequentially derive the i-th binary path}}
    \STATE $\mat{B}_i \leftarrow \sign(\mat{R}_{i-1})$.
    \STATE $\hat{\mat{W}}_i \leftarrow \vec{g}_i \odot \mat{B}_i \odot \vec{h}_i$.
    \STATE $\hat{\mat{W}}^{(k)} \leftarrow \hat{\mat{W}}^{(k)} + \hat{\mat{W}}_i$.
    
    \STATE \textcolor{gray!70!black}{\textit{// Update residual for the next path}}
    \STATE $\mat{R}_{i} \leftarrow \mat{R}_{i-1} - \hat{\mat{W}}_i$.
\ENDFOR
\STATE $\mat{Y} \leftarrow \hat{\mat{W}}^{(k)} \mat{X}$. \quad \textcolor{gray!70!black}{\textit{// Compute layer output}}
\STATE Calculate Loss $\mathcal{L}(\mat{Y}, \mat{T})$.
\vspace{1mm}
\STATE \textcolor{gray!70!black}{\textit{// 2. Backward Pass}}
\STATE $\Delta \leftarrow \partial\mathcal{L} / \partial \mat{Y}$. \quad\textcolor{gray!70!black}{\textit{(Output gradient)}}

\STATE \textcolor{gray!70!black}{\textit{// Surrogate gradient for the shared weight $\mat{W}_{\mathrm{FP}}$}}
\STATE $\nabla_{\mat{W}_{\mathrm{FP}}} \leftarrow \Delta \mat{X^\top}$.
\STATE \textcolor{gray!70!black}{\textit{// Gradients for scales (treating $\mat{B}_i$ as constant)}}
\FOR{$i=1$ to $k$}
    \STATE Compute $\nabla_{\vec{g}_i}$ and $\nabla_{\vec{h}_i}$ using $\Delta, \mat{B}_i, \mat{X}$, and other scales.
\ENDFOR
\vspace{1mm}
\STATE \textcolor{gray!70!black}{\textit{// 3. Parameter Update}}
\STATE Update $\{\mat{W}_{\mathrm{FP}}, (\vec{g}_i, \vec{h}_i)_{i=1}^k\}$ using an optimizer with the computed gradients.
\end{algorithmic}

        \end{minipage}%
    }
\end{algorithm}

\newpage
\section{Core Kernel Code}

\begingroup
\label{ker:rabit_kernel}
  % (lstinputlisting) algorithms/rabit_kernel_utf8.cu
\begin{lstlisting}[
    style=cudaStyle,
    inputencoding=utf8,
    breaklines=true, breakatwhitespace=true, columns=fullflexible, keepspaces=true,
    numbers=left, numberstyle=\tiny, numbersep=8pt,
    xleftmargin=3em, framexleftmargin=3em  
  ]
#include <ATen/cuda/CUDAContext.h>
#include <c10/cuda/CUDAException.h>
#include <c10/cuda/CUDAGuard.h>
#include <c10/macros/Macros.h>
#include <cuda_fp16.h>
#include <cuda_runtime.h>
#include <torch/extension.h>

#include <sstream>
#include <string>
#include <type_traits>

// ========================================================================
// === Macros & Constants ===
// ========================================================================

#define CHECK_CUDA(x) TORCH_CHECK(x.is_cuda(), #x " must be a CUDA tensor")
#define CHECK_CONTIGUOUS(x) TORCH_CHECK(x.is_contiguous(), #x " must be contiguous")
#define CHECK_F16(x) TORCH_CHECK(x.scalar_type() == torch::kFloat16, #x " must be float16 tensor")
#define CHECK_INT32(x) TORCH_CHECK(x.scalar_type() == torch::kInt32, #x " tensor dtype must be int32")

#define CHECK_CUDA_CONT_F16(x) \
    do {                       \
        CHECK_CUDA(x);         \
        CHECK_CONTIGUOUS(x);   \
        CHECK_F16(x);          \
    } while (0)

#define CHECK_CUDA_CONT_INT32(x) \
    do {                         \
        CHECK_CUDA(x);           \
        CHECK_CONTIGUOUS(x);     \
        CHECK_INT32(x);          \
    } while (0)

constexpr int WARP_SIZE = 32;
constexpr uint32_t SIGN_MASK_U32 = 0x80008000u;
constexpr unsigned FULL_MASK = 0xffffffff;

// ========================================================================
// === Device Helper Functions ===
// ========================================================================

static __device__ __forceinline__ uint32_t half2_to_uint32(const half2& h) {
    return reinterpret_cast<const uint32_t&>(h);
}

static __device__ __forceinline__ half2 uint32_to_half2(uint32_t v) {
    return reinterpret_cast<const half2&>(v);
}

// Sign-flipping utility.
// Logic: bits determine if we flip the sign of the float16 values packed in uint4.
__device__ __forceinline__ uint4 apply_sign(uint4 x4, uint32_t bits, int shift_base) {
    uint32_t shifted = bits << shift_base;
    x4.x ^= shifted & SIGN_MASK_U32;
    x4.y ^= (shifted << 1) & SIGN_MASK_U32;
    x4.z ^= (shifted << 2) & SIGN_MASK_U32;
    x4.w ^= (shifted << 3) & SIGN_MASK_U32;
    return x4;
}

// ========================================================================
// === Kernel Implementation ===
// ========================================================================

template <unsigned NUM_THREAD, unsigned NUM_ROW_PER_WARP = 2, unsigned NUM_ACC = 4, unsigned NUM_PIPELINE_STAGE = 2>
__forceinline__ __device__ void rabit_2bit_half_impl(
    const half* __restrict__ x,
    const half* __restrict__ scale_g_0,
    const uint32_t* __restrict__ Wbits,
    const half* __restrict__ scale_h_0,
    const half* __restrict__ scale_g_1,
    const half* __restrict__ scale_h_1,
    half* __restrict__ y,
    int N, int M) {
    
    // Constraints
    static_assert(NUM_ROW_PER_WARP == 2, "Only 2 rows per warp supported currently");
    static_assert(NUM_THREAD % WARP_SIZE == 0, "Threads must be multiple of warp size");

    constexpr unsigned NUM_WARP = NUM_THREAD / WARP_SIZE;
    constexpr unsigned NUM_ROW_PER_BLOCK = NUM_ROW_PER_WARP * NUM_WARP;

    // Identifiers
    const int tid = threadIdx.x % WARP_SIZE;
    const int wid = threadIdx.x / WARP_SIZE;

    // Vectorized pointers (loading 128-bit chunks / 8 halves at a time)
    const uint4* x8s = reinterpret_cast<const uint4*>(x);
    const uint4* sg8s_0 = reinterpret_cast<const uint4*>(scale_g_0);
    const uint4* sg8s_1 = reinterpret_cast<const uint4*>(scale_g_1);

    // Data Structures for Pipelining
    struct RowAccum {
        half2 plane[2][NUM_ACC];
    };
    struct ColumnScalePair {
        uint4 g0;
        uint4 g1;
    };
    struct StageTile {
        uint4 x;
        ColumnScalePair scale;
        uint32_t bits;
    };
    struct RowContext {
        int index;
        RowAccum acc;
    };

    RowContext rows[NUM_ROW_PER_WARP] = {};
    StageTile pipe[NUM_PIPELINE_STAGE];

    // Setup Row Indices
    const int block_row_base = blockIdx.x * NUM_ROW_PER_BLOCK;
    #pragma unroll
    for (unsigned irow = 0; irow < NUM_ROW_PER_WARP; ++irow) {
        rows[irow].index = block_row_base + static_cast<int>(wid) * NUM_ROW_PER_WARP + static_cast<int>(irow);
    }

    // Weight Bit Indexing
    // N8 = Number of 8-half chunks (128-bit words)
    const int N8 = N >> 3;
    const int wordsN = N8; 
    const int tile_stride = wordsN;
    const int tile_idx = blockIdx.x * NUM_WARP + wid;
    const uint32_t* tile_ptr = Wbits + tile_idx * tile_stride;

    // --- Helper Lambdas ---

    auto load_to_reg = [&](int ireg, int idx8) {
        StageTile& stage = pipe[ireg];
        stage.x = x8s[idx8];
        stage.scale.g0 = sg8s_0[idx8];
        stage.scale.g1 = sg8s_1[idx8];
        stage.bits = *(tile_ptr + idx8);
    };

    // Accumulator indices for circular buffer usage or simple unrolling
    const int MASK = NUM_ACC - 1;
    auto fma = [&](const uint4& scale, const uint4& x, half2 acc[NUM_ACC]) {
        acc[0] = __hfma2(uint32_to_half2(scale.x), uint32_to_half2(x.x), acc[0]);
        acc[1 & MASK] = __hfma2(uint32_to_half2(scale.y), uint32_to_half2(x.y), acc[1 & MASK]);
        acc[2 & MASK] = __hfma2(uint32_to_half2(scale.z), uint32_to_half2(x.z), acc[2 & MASK]);
        acc[3 & MASK] = __hfma2(uint32_to_half2(scale.w), uint32_to_half2(x.w), acc[3 & MASK]);
    };

    auto calc_main = [&](int ireg) {
        StageTile& stage = pipe[ireg];
        #pragma unroll
        for (unsigned irow = 0; irow < NUM_ROW_PER_WARP; ++irow) {
            RowAccum& acc = rows[irow].acc;
            // Apply sign logic based on bits (0 or 1 selects path)
            // 0 + 8*irow and 4 + 8*irow are specific bit shifts for the packed format
            fma(stage.scale.g0, apply_sign(stage.x, stage.bits, 0 + 8 * irow), acc.plane[0]);
            fma(stage.scale.g1, apply_sign(stage.x, stage.bits, 4 + 8 * irow), acc.plane[1]);
        }
    };

    // --- Main Loop (Pipelined) ---

    int idx_load = tid;
    int idx_calc = tid;

    // Prologue: Fill pipeline
    #pragma unroll
    for (int istg = 0; istg < NUM_PIPELINE_STAGE; ++istg) {
        if (idx_load < N8) {
            load_to_reg(istg, idx_load);
        }
        idx_load += WARP_SIZE;
    }

    // Steady state
    const int bound_mainloop = N8 - (NUM_PIPELINE_STAGE * WARP_SIZE);
    while (idx_calc < bound_mainloop) {
        #pragma unroll
        for (int istg = 0; istg < NUM_PIPELINE_STAGE; ++istg) {
            calc_main(istg);
        }
        #pragma unroll
        for (int istg = 0; istg < NUM_PIPELINE_STAGE; ++istg) {
            if (idx_load < N8) {
                load_to_reg(istg, idx_load);
            }
            idx_load += WARP_SIZE;
            idx_calc += WARP_SIZE;
        }
    }

    // Epilogue: Drain pipeline
    #pragma unroll
    for (int istg = 0; istg < NUM_PIPELINE_STAGE; ++istg) {
        if (idx_calc < N8) {
            calc_main(istg);
            idx_calc += WARP_SIZE;
        }
        if (idx_load < N8) {
            load_to_reg(istg, idx_load);
            idx_load += WARP_SIZE;
        }
    }

    // --- Reduction & Output ---

    auto warp_sum_half = [](half v) {
        half2 h2 = __halves2half2(v, __float2half(0.0f));
        #pragma unroll
        for (int off = WARP_SIZE >> 1; off > 0; off >>= 1) {
            half2 other = __shfl_xor_sync(FULL_MASK, h2, off);
            h2 = __hadd2(h2, other);
        }
        return __low2half(h2);
    };

    #pragma unroll
    for (unsigned irow = 0; irow < NUM_ROW_PER_WARP; ++irow) {
        RowContext& row = rows[irow];
        const int out_i = row.index;
        RowAccum& acc = row.acc;

        half2 lane_acc_0 = __float2half2_rn(0.0f);
        half2 lane_acc_1 = __float2half2_rn(0.0f);

        #pragma unroll
        for (unsigned iacc = 0; iacc < NUM_ACC; ++iacc) {
            lane_acc_0 = __hadd2(lane_acc_0, acc.plane[0][iacc]);
            lane_acc_1 = __hadd2(lane_acc_1, acc.plane[1][iacc]);
        }

        // Horizontal sum within registers (half2 -> half)
        half sum0_h = __hadd(__low2half(lane_acc_0), __high2half(lane_acc_0));
        half sum1_h = __hadd(__low2half(lane_acc_1), __high2half(lane_acc_1));

        // Apply final scaling factors (h0, h1) and combine paths
        half lane_total_h = __hfma(sum0_h, scale_h_0[out_i], __hmul(sum1_h, scale_h_1[out_i]));

        // Cross-lane reduction
        half warp_sum_h = warp_sum_half(lane_total_h);

        if (tid == 0) {
            y[out_i] = warp_sum_h;
        }
    }
}

template <unsigned NUM_THREAD, unsigned NUM_ROW_PER_WARP = 2, unsigned NUM_ACC = 4, unsigned NUM_PIPELINE_STAGE = 2>
__launch_bounds__(NUM_THREAD) __global__ void rabit_2bit_half_dyn_kernel(
    const half* __restrict__ x,
    const half* __restrict__ scale_g_0,
    const uint32_t* __restrict__ Wbits,
    const half* __restrict__ scale_h_0,
    const half* __restrict__ scale_g_1,
    const half* __restrict__ scale_h_1,
    half* __restrict__ y,
    int N, int M) {
    
    const int s = blockIdx.y;
    // Offset for batch size (sequence length)
    const half* x_ptr_for_this_seq = x + s * N;
    half* y_ptr_for_this_seq = y + s * M;

    rabit_2bit_half_impl<NUM_THREAD, NUM_ROW_PER_WARP, NUM_ACC, NUM_PIPELINE_STAGE>(
        x_ptr_for_this_seq, scale_g_0, Wbits, scale_h_0, scale_g_1, scale_h_1, y_ptr_for_this_seq, N, M);
}

// ========================================================================
// === Template Dispatcher ===
// ========================================================================

// Helper to sweep through a compile-time list of values.
template <typename T, T... values>
struct parameter_sweep {
    template <typename F, typename U>
    void operator()(const char* name, U runtime_value, F&& func) const {
        bool found = false;
        // Fold expression to iterate over template values
        (
            [&]() {
                if (!found && runtime_value == values) {
                    std::forward<decltype(func)>(func)(std::integral_constant<T, values>{});
                    found = true;
                }
            }(),
            ...);

        if (!found) {
            std::stringstream ss;
            bool is_first = true;
            (((is_first ? ss : (ss << ", ")) << values, is_first = false), ...);
            TORCH_CHECK(false, name, " should be one of [", ss.str(), "], but ", runtime_value, " was given");
        }
    }
};

// ========================================================================
// === C++ Interface ===
// ========================================================================

torch::Tensor rabit_2bit_half_dyn_forward(
    torch::Tensor x,
    torch::Tensor scale_g_0, torch::Tensor Wbits, torch::Tensor scale_h_0,
    torch::Tensor scale_g_1, torch::Tensor scale_h_1,
    int64_t num_thread, int64_t num_row_per_warp, int64_t num_acc, int64_t num_pipeline_stage) {
    
    CHECK_CUDA_CONT_F16(x);
    CHECK_CUDA_CONT_F16(scale_g_0);
    CHECK_CUDA_CONT_INT32(Wbits);
    CHECK_CUDA_CONT_F16(scale_h_0);
    CHECK_CUDA_CONT_F16(scale_g_1);
    CHECK_CUDA_CONT_F16(scale_h_1);

    TORCH_CHECK(x.dim() == 1 || x.dim() == 2, "Dynamic Half: x must be 1D or 2D");

    const int seqlen = x.dim() == 1 ? 1 : static_cast<int>(x.size(0));
    const int N = static_cast<int>(x.size(-1));
    const int M = static_cast<int>(scale_h_0.size(0));

    // Validations
    TORCH_CHECK(scale_g_0.size(0) == N, "Dynamic Half: scale_g_0 size mismatch");
    TORCH_CHECK(scale_h_0.size(0) == M, "Dynamic Half: scale_h_0 size mismatch");
    TORCH_CHECK(scale_g_1.size(0) == N, "Dynamic Half: scale_g_1 size mismatch");
    TORCH_CHECK(scale_h_1.size(0) == M, "Dynamic Half: scale_h_1 size mismatch");
    TORCH_CHECK((N % 32 == 0), "Dynamic Half: N must be multiple of 32");
    TORCH_CHECK(num_row_per_warp > 0, "Dynamic Half: num_row_per_warp must be positive");

    const int rows_per_warp = static_cast<int>(num_row_per_warp);
    TORCH_CHECK(M % rows_per_warp == 0, "Dynamic Half: rows_per_warp must divide output size");
    TORCH_CHECK(Wbits.dim() == 2 && Wbits.size(0) == M / rows_per_warp && Wbits.size(1) == N / 8,
                "Dynamic Half: Wbits shape mismatch");

    auto y = torch::empty({(long long)seqlen, (long long)M}, x.options());

    at::cuda::CUDAGuard guard(x.device());
    auto stream = at::cuda::getCurrentCUDAStream();

    // Template Sweepers
    parameter_sweep<unsigned, 32u, 64u, 128u, 256u, 512u, 1024u> sweep_num_thread;
    parameter_sweep<unsigned, 2> sweep_row_per_warp;
    parameter_sweep<unsigned, 1, 2, 4> sweep_num_acc;
    parameter_sweep<unsigned, 1, 2, 3, 4> sweep_pipeline_stage;

    // Dispatcher
    auto kernel_func = [=](auto num_thread_c, auto num_row_per_warp_c, auto num_acc_c, auto num_pipeline_stage_c) {
        constexpr unsigned num_thread_v = num_thread_c.value;
        constexpr unsigned num_row_per_warp_v = num_row_per_warp_c.value;
        constexpr unsigned num_acc_v = num_acc_c.value;
        constexpr unsigned pipeline_stage_v = num_pipeline_stage_c.value;

        // Runtime check against compile time constant wrapper
        if (num_row_per_warp_v != static_cast<unsigned>(rows_per_warp)) return;

        constexpr unsigned NUM_ROW_PER_BLOCK = (num_thread_v / WARP_SIZE) * num_row_per_warp_v;
        TORCH_CHECK(M % NUM_ROW_PER_BLOCK == 0, "Output size ", M, " must be div by block rows ", NUM_ROW_PER_BLOCK);

        dim3 grid_dim(M / NUM_ROW_PER_BLOCK, seqlen);

        rabit_2bit_half_dyn_kernel<num_thread_v, num_row_per_warp_v, num_acc_v, pipeline_stage_v>
            <<<grid_dim, num_thread_v, 0, stream>>>(
                reinterpret_cast<const half*>(x.data_ptr<at::Half>()),
                reinterpret_cast<const half*>(scale_g_0.data_ptr<at::Half>()),
                reinterpret_cast<const uint32_t*>(Wbits.data_ptr<int32_t>()),
                reinterpret_cast<const half*>(scale_h_0.data_ptr<at::Half>()),
                reinterpret_cast<const half*>(scale_g_1.data_ptr<at::Half>()),
                reinterpret_cast<const half*>(scale_h_1.data_ptr<at::Half>()),
                reinterpret_cast<half*>(y.data_ptr<at::Half>()),
                N, M);
        C10_CUDA_KERNEL_LAUNCH_CHECK();
    };

    // Execute Sweep
    sweep_num_thread("num_thread", num_thread, [&](auto num_thread_c) {
        sweep_row_per_warp("num_row_per_warp", num_row_per_warp, [&](auto num_row_per_warp_c) {
            sweep_num_acc("num_acc", num_acc, [&](auto num_acc_c) {
                sweep_pipeline_stage("num_pipeline_stage", num_pipeline_stage, [&](auto num_pipeline_stage_c) {
                    kernel_func(num_thread_c, num_row_per_warp_c, num_acc_c, num_pipeline_stage_c);
                });
            });
        });
    });

    if (x.dim() == 1) {
        y.squeeze_(0);
    }
    return y;
}
\end{lstlisting}
\endgroup
%%%%%%%%%%%%%%%%%%%%%%%%%%%%%%%%%%%%%%%%%%%%%%%%%%%%%%%%%%%%%%%%%%%%%%%%%%%%%%%
%%%%%%%%%%%%%%%%%%%%%%%%%%%%%%%%%%%%%%%%%%%%%%%%%%%%%%%%%%%%%%%%%%%%%%%%%%%%%%%

\end{document}